\newtheorem{theorem}{Theorem}
\newtheorem{lemma}[theorem]{Lemma}
\newtheorem{corollary}{Corollary}
\newtheorem{defn}{Definition}
\def\eps{\ensuremath{\epsilon}\xspace}
\def\one{\ensuremath{\mathds{1}\xspace}}
\def\rarrow{\ensuremath{\rightarrow}\xspace}
\def\sig{\ensuremath{\sigma}\xspace}
\def\eps{\ensuremath{\epsilon}\xspace}
\def\dt{{\ensuremath{\delta}\xspace} }
\def\sig{\ensuremath{\sigma}\xspace}
\newcommand{\fr}[2]{ { \frac{#1}{#2} }}
\def\lt{\left}
\def\rt{\right}
\newcommand{\vast}{\bBigg@{3}}
\newcommand{\Vast}{\bBigg@{4}}
\def\w{{{\boldsymbol w}}}
\def\x{{{\boldsymbol x}}}
\def\y{{{\boldsymbol y}}}
\def\la{\langle}
\def\ra{\rangle}
\def\lam{\ensuremath{\lambda}}
\def\p{{{\boldsymbol p}}}
\def\cD{\ensuremath{\mathcal{D}}\xspace}
\def\w{{{\boldsymbol w}}}
\def\cA{\ensuremath{\mathcal{A}}\xspace} 
\def\cM{\ensuremath{\mathcal{M}}\xspace}
\newcommand{\blue}[1]{{\color[rgb]{.3,.5,1}#1}}
\def\g{\ensuremath{\bg}}
\DeclareMathOperator{\Regret}{Regret}
\DeclareMathOperator{\Wealth}{Wealth}
\def\lam{{\ensuremath{\lambda}\xspace} }
\def\a{\ensuremath{\boldsymbol{a}}}
\def\R{\ensuremath{\mathbb{R}}} 
\def\E{\ensuremath{\mathbb{E}}}
\def\cF{{\ensuremath{\mathcal{F}}}}
\def\Om{{\ensuremath{\Omega}}} 
\DeclareMathOperator{\EE}{\mathds{E}}
\DeclareMathOperator{\PP}{\mathds{P}}
\def\hg{\ensuremath{\hat{g}}} 
\def\nab{\ensuremath{\nabla}} 
\def\bfone{{{\mathbf 1}}}
\newcommand*\diff{\mathop{}\!\mathrm{d}}
\let\SS\undefined
\def\ddefloop#1{\ifx\ddefloop#1\else\ddef{#1}\expandafter\ddefloop\fi}
\def\ddef#1{\expandafter\def\csname c#1\endcsname{\ensuremath{\mathcal{#1}}}}
\def\ddef#1{\expandafter\def\csname #1#1\endcsname{\ensuremath{\mathbb{#1}}}}
\def\ddef#1{\expandafter\def\csname b#1\endcsname{\ensuremath{\boldsymbol{#1}}}}
\newcommand{\bxi}{\boldsymbol{\xi}}
\let\SS\undefined
\def\SS{{\mathbb{S}}}
\let\EE\undefined
\let\PP\undefined
\DeclareMathOperator{\EE}{\mathbb{E}}
\DeclareMathOperator{\PP}{\mathbb{P}}
\DeclareMathOperator{\erf}{\text{erf}}
\def\oned{{\normalfont{\text{1D}}}}
\def\hbg{\hat{\bg}}
\def\tM{\text{\normalfont{M}}}
\def\tD{\text{\normalfont{D}}}
\def\balpha{{\boldsymbol{\alpha}}}
\def\bbeta{{\boldsymbol{\beta}}}
\def\Lin{{\normalfont{\text{Lin}}}}
\def\hz{{\ensuremath{\hat{z}}}} 
\definecolor{mydarkblue}{rgb}{0,0.08,0.45}
\let\SS\undefined
\def\SS{{\mathbb{S}}} 
\def\bv{\ensuremath{\boldsymbol{v}}}
\title{Parameter-Free Online Convex Optimization\\with Sub-Exponential Noise}
\author{
    Kwang-Sung Jun \\
    Boston University \\
    \texttt{kjun@bu.edu}\\
    \and
    Francesco Orabona\\
    Boston University \\
    \texttt{francesco@orabona.com}\\
}
\date{}
\begin{document}

\maketitle

\defcitealias{cutkosky17online}{Cutkosky and Boahen (COLT 2017)}

\begin{abstract}%
We consider the problem of unconstrained online convex optimization (OCO) with sub-exponential noise, a strictly more general problem than the standard OCO.
In this setting, the learner receives a subgradient of the loss functions corrupted by sub-exponential noise and strives to achieve optimal regret guarantee, without knowledge of the competitor norm, i.e., in a parameter-free way. 
Recently, \citetalias{cutkosky17online} proved that, given unbounded subgradients, it is impossible to guarantee a sublinear regret due to an exponential penalty. 
This paper shows that it is possible to go around the lower bound by allowing the observed subgradients to be unbounded via stochastic noise. 
However, the presence of unbounded noise in unconstrained OCO is challenging; existing algorithms do not provide near-optimal regret bounds or fail to have a guarantee.
So, we design a novel parameter-free OCO algorithm for Banach space, which we call BANCO, via a reduction to betting on noisy coins. 
We show that BANCO achieves the optimal regret rate in our problem.
Finally, we show the application of our results to obtain a parameter-free locally private stochastic subgradient descent algorithm, and the connection to the law of iterated logarithms.
\end{abstract}

\section{Introduction}

In this paper, we are interested in the problem of unconstrained Online Convex Optimization (OCO) with sub-exponential noise.
In the standard unconstrained OCO problem, at each round $t$, an algorithm chooses an iterate $\bw_t \in \R^d$ and then receives a \textit{negative} subgradient $\bg_t \in -\partial \ell_t(\bw_t)$ of a convex loss function $\ell_t(\bx)$ given by an adversary.\footnote{
  The notation $\bg_t$ is a mnemonic for ``gain'' since the subgradients correspond to losses in online linear games.
}
The goal of the learner is to minimize the regret defined by the difference between the cumulative loss of the learner and that of the unknown, arbitrary comparator $\bu$: 
\begin{align*}
\Regret_T(\bu) = \sum_{t=1}^T \ell_t(\w_t) - \sum_{t=1}^T \ell_t(\bu)~.
\end{align*}
Departing from the standard setup, we consider a game where the learner receives a \textit{noisy} version $\hbg_t$ of $\bg_t$.
Specifically, we assume that the noise $\hbg_t - \bg_t$ is sub-exponential.
Note that such a setting nicely mirrors the one of optimization of a fixed convex function with a stochastic first-order oracle.

The presence of noise implies that $\bw_t$, a function of the past noisy subgradients, is also stochastic. Thus, it is natural to minimize the \textit{expected} regret:
\begin{align}
\label{eq:def-regret}
  \E [\Regret_T(\bu)] 
  = \E \lt[ \sum_{t=1}^T \ell_t(\bw_t) - \sum_{t=1}^T \ell_t(\bu)\rt]~.
\end{align}
We will define more formally the setting and noise in Section~\ref{sec:setting}.
Our goal is to achieve expected regret bounds that have optimal dependency on $\|\bu\|$ and $T$, that is the so-called \emph{parameter-free} or \emph{adaptive} OCO algorithms~\citep{FosterRS15,orabona16coin,FosterKMS17,cutkosky17online,Kotlowski17,cutkosky18blackbox,foster18online}.

Our problem is motivated by a recent lower bound result on the unconstrained OCO showing that, without prior information on the largest subgradient, parameter-free algorithms are doomed to suffer an exponential penalty $\exp(\max_t L_t/L_{t-1})$, where $L_t$ is dual norm of the largest subgradient up to time $t$~\citep{cutkosky17online}.
Given such a catastrophic negative result that implies the excessive power of the adversary, one may ask the following question: under what condition on the game can the learner minimize regret efficiently with unbounded subgradients?
Our study provides a positive answer by allowing subgradients observed by the learner to be unbounded via \textit{stochasticity}, which limits the adversarial power without restricting observed subgradients to be bounded.

In order to develop low-regret algorithms for noisy OCO, it is tempting to directly use existing algorithms and their guarantees.
However, these attempts either result in a suboptimal dependence on $\|\bu\|$ in the regret, namely $\|\bu\|^2$, or do not lead to nontrivial regret bounds (see Section~\ref{sec:warmup} for details).
This motivates the following question: does there exist an unconstrained noisy OCO algorithm whose expected regret scales as optimally with $\|\bu\|$ and $T$?
We answer this question in the affirmative by proposing a new Betting Algorithm for Noisy COins (BANCO).
BANCO enjoys expected regret
\[
O\left(\|\bu\| \sqrt{(G^2 + \sig^2) T\log(1+\|\bu\|T)}\right)
\]
in a smooth Banach space, where $G$ is the bound on the \textit{expected} negative subgradients $\bg_t$ and $\sig^2$ is the variance of the noisy negative subgradients $\hbg_t$.
Our result reveals that, despite the noisy and unbounded nature of the feedback, it is possible to adapt to the unknown and best-in-hindsight comparator just as in the noise-free environments, in expectation.

BANCO is constructed via a natural extension of the coin betting framework~\citep{orabona16coin}, where we reduce noisy OCO to a 1-d game of betting money on \textit{noisy} coin flips to maximize one's expected wealth.
The noisy OCO in Banach space is then reduced to the 1-d coin betting, equipped with any \textit{constrained} noisy OCO learner in a black-box manner.
We describe the coin betting view and its extension to Banach OCO in Section~\ref{sec:1d_oco} and~\ref{sec:banach} respectively.
Furthermore, we further show that the dependence on the variance $\sigma^2$ cannot be improved, also matching the dependence on $\|\bu\|$ up to logarithmic factors.
We stress that, combining our lower bound and the existing ones in the literature, our regret upper bound is unimprovable.
We discuss details on lower bounds in Section~\ref{sec:lb}.

Finally, in Section~\ref{sec:apps}, we show some consequences of our results. Indeed, the noisy OCO problem and its algorithms have numerous applications as learning with noisy observations is a dominating paradigm of machine learning.
First, we show that our noisy OCO algorithm can be directly used for locally differentially-private stochastic subgradient descent (SGD). In fact, in private SGD noise is added on the subgradients to guarantee privacy, perfectly fitting our framework. In particular, we achieve the first parameter-free locally private SGD algorithm.
Second, we show that our algorithmic construction reveals a tight connection to concentration inequalities.
Specifically, we show that our algorithm implies a Banach valued concentration inequality that matches the rate of the law of the iterated logarithm.
The connection is made through a simple observation that a noisy coin betting potential directly implies a supermartingale, which is then combined with Doob's inequality to show concentration inequalities that hold for any time step $t$.

We conclude our paper with open problems in Section~\ref{sec:conclusion}.

\section{Problem Definition and Preliminaries} 
\label{sec:setting}

In this section, we describe our notations, formally define the problem, and provide background on coin betting.

\paragraph{Notations.}
The dual of a Banach space $V$ over a field $F$, denoted by $V^\star$, is the set of all continuous linear maps $V\rarrow F$.
We use the notation $\la \bv, \w\ra$ to indicated the application of a dual vector $\bv\in V^\star$ to a vector $\w\in V$. 
$V^\star$ is also a Banach space with the dual norm: $\|\bv\|_\star = \sup_{\w\in V, \|\w\|\le1} \la \bv,\w\ra$.
We abbreviate $\x_1, \ldots, \x_t$ by $\x_{1:t}$.

\vspace{-.3em}
\paragraph{Online convex optimization with noise.}
In OCO with noise, as introduced in the introduction, the learner receives a noisy version $\hbg_t \in V^\star$ of the negative subgradient $\bg_t \in V^\star$.
Since the learner's predictions $\w_t \in V$ are a function of past noisy subgradients, the regret is also stochastic.
Therefore, our goal is the minimize the \textit{expected} regret defined in~\eqref{eq:def-regret}.

We assume that the true subgradients are bounded by $G$: $\|\bg_t\|_* \le G$.
Furthermore, the noise $\bxi_t := \hbg_t - \EE_t[\hbg_t]$ is conditionally zero-mean and has conditional finite variance measured with the dual norm:
\vspace{-.5em}
\begin{align}
\label{eq:bounded_variance}
\EE\lt[\|\bxi_t\|_*^2 \mid \bxi_{1:t-1}\rt] \le \sig^2, \forall t,
\end{align}
for some $\sig>0$.
Hereafter, we use the notation $\EE_t$ to denote $\EE [ \cdot \mid \bxi_{1:t-1}]$.
We also assume a tail condition such that $\bxi_t$ is conditionally sub-exponential with parameters $(\sigma_\oned^2, b)$:\footnote{$\beta$ is often qualified as $|\beta|<\fr1b$ in the literature. Our qualification is merely for ease of exposition.  }
\vspace{-.5em}
\begin{align}
\label{eq:sub_gamma}
\max_{\a: \|\a\|\le1} \ \E_t\left[\exp(\beta
\la \bxi_t, \ba \ra) \right] 
\leq \exp\left(\frac{\beta^2 \sigma_{\oned}^2}{2}\right), \  \forall |\beta| \leq \frac1b~.
\end{align}
One can show that, when \eqref{eq:sub_gamma} is achieved with equality, we have $\sigma_{\oned}^2 \le \sigma^2$.
The intuition of the condition above is that the tail of the noise $\bxi_t$ behaves well in any direction; a similar form of condition for sub-Gaussian vectors can be found in~\cite{hsu12tail}.
This noise definition covers a wide range of distributions, including Gaussian and Laplace. 
Consider the L2 norm for simplicity.
If $d=1$, we have $\sig^2 = \sig^2_\oned$.
This is not true in general and the relationship depends on the noise distribution and the norm being considered.
If $\bxi_t \sim \cN(0,s^2 \mathbf{I})$, then one can see that $\sig_\oned^2 = s^2$ and $\sig^2 = d s^2$.
As another example, the Laplace mechanism noise used in differentially-private learning satisfies the tail condition above; see Section~\ref{sec:private_sgd}.

\vspace{-.3em}
\paragraph{OCO as betting on noisy coins.}
One recent framework for unconstrained OCO is coin betting, which views the OCO game as maximizing a gambler's wealth via repeated betting on \textit{adversarial} coin flips~\citep{mcmahan13minimax,orabona16coin}. This framework provides a straightforward way to design algorithms that achieve optimal regret bounds with respect to any competitor, without imposing a bounded set for the competitor nor any parameter to tune, i.e., parameter-free.
Consider 1d OCO with $G=1$ for simplicity.
The gambler starts with the initial endowment $\Wealth_0 = \tau$ for some $\tau>0$.
In each iteration $t$, the gambler determines how much money to bet and whether to bet on heads ($+1$) or tails ($-1$), which is encoded as $|w_t|$ and $\text{sign}(w_t)$ respectively.
After the adversary's (continuous) coin outcome $g_t \in [-1,1]$ is revealed, the gambler's wealth, denoted by $\Wealth_t$, is updated additively:
$\Wealth_t = \Wealth_{t-1} + g_t w_t$.
That is, the gambler makes (loses) money when she gets the coin side correct (incorrect), and the amount of return (loss) is determined by $|g_t w_t|$ (respectively).
Developing successful strategies critically rely on designing a potential function $F_t(x)$ and an appropriate betting amount $w_t$ such that
\vspace{-.5em}
\begin{align}\label{eq:coinbetting}
  \Wealth_0 = F_0(0) \quad\text{ and }\quad   F_{t-1}(x) + g_t w_t \ge F_t(x + g_t), \forall t~.
\end{align}
One can show that the two properties above imply $\Wealth_t \ge F_t(\sum_{s=1}^t g_s)$ (the derivation is similar to~\eqref{eq:wealth-lb-proof} below).
\citet[Theorem 1]{mcmahan14unconstrained} show that a lower bound on $\Wealth_T$ is equivalent to an upper bound on the linearized regret w.r.t. a comparator $u$, $\sum_{t=1}^T g_t (u - w_t)$, which reveals a tight connection between coin betting and OCO.

In this paper, we extend the coin betting problem to noisy coin outcomes.
Specifically, the gambler observes a noisy version of the coin outcome $\hg_t \in \RR$ rather than $g_t = \EE[\hg_t]$.
While the extension appears obvious, the existing coin betting strategies \citep[e.g.][]{orabona16coin,orabona17training} cannot be applied to the noisy setting; their design ensures that the wealth never goes below 0 w.p. 1, which cannot be true for our setting as the coin outcome can be arbitrarily bad.

To cope with noisy coins, we develop a \textit{noisy coin betting} framework.
The key idea is that, although we cannot guarantee the nonnegativity of wealth, we can guarantee it for the \textit{expected} wealth.
Departing from the conditions for noise-free coin betting~\eqref{eq:coinbetting}, we assume that $F_t$ and $w_t$ satisfy the betting relationship in conditional expectation
\vspace{-.5em}
\begin{align}
\label{eq:coinbetting-condition}
F_{t-1}(x) + g_t w_t  \geq \E_t[F_t(x+\hg_t)]~.
\end{align}
This immediately implies that
\vspace{-.5em}
\begin{align}\label{eq:expected-wealth-bound}
  \E \Wealth_t 
  \ge \E \left[ F_t\left(\sum_{s=1}^t \hg_s\right) \right]~.
\end{align}
In fact, by induction, assume that~\eqref{eq:expected-wealth-bound} holds for $t-1$.
Then,
\vspace{-.5em}
\begin{align}\label{eq:wealth-lb-proof}
\begin{split}
\E \Wealth_t 
&= \E[ \Wealth_{t-1} + \E_t \hat g_t w_t ] 
\stackrel{(a)}{\ge} \E \left[ F_{t-1}\left(\sum_{s=1}^{t-1} \hg_s\right) + \E_t \hat g_t w_t \right] 
\\&= \E \left[ F_{t-1}\left(\sum_{s=1}^{t-1} \hg_s\right) + g_t w_t \right]
\stackrel{(b)}{\ge} \E \left[ F_t\left(\sum_{s=1}^t \hg_s\right) \right],
\end{split}
\end{align}
where $(a)$ is by the inductive hypothesis and $(b)$ is by~\eqref{eq:coinbetting-condition}.

\vspace{-.5em}
\section{The Devil is in the Details: Failing Approaches}
\label{sec:warmup}

As a warm-up, we discuss how one might attempt to extend existing algorithms for the noisy setting and why these approaches would fail.
For simplicity, consider that $V=\R^d$, the norm is the L2 norm, and $G=1$.
For this, we need algorithms that enjoy regret bounds \textit{without requiring a subgradient bound as an input}.
For example, one can apply online subgradient descent (OGD), which guarantees a regret bound w.r.t. the noisy subgradients:
\vspace{-.5em}
\begin{align*}
  \hat R^{\Lin}_T(\bu) := \sum_{t=1}^T \la\hbg_t, \bu - \w_t \ra = \fr{\|\bu\|^2}{2\eta} + \frac{\eta}{2} \sum_{t=1}^T \|\hbg_t\|^2~.
\end{align*}
Notice that $\hat R^\Lin_T(\bu)$ itself does not bound $\Regret_T(\bu)$ and one must turn to either \textit{expected} or \textit{high probability} regret bounds.
With the choice of the step size $\eta = 1/\sqrt{(\sig^2+1)T}$, we have an expected regret bound:
\vspace{-.5em}
\begin{align*}
\EE [\Regret_T(\bu) ]
\stackrel{(a)}{\le} \EE \lt[\sum_{t=1}^T \la\g_t, \bu - \bw_t \ra \rt]
\stackrel{(b)}{=} \EE \lt[\sum_{t=1}^T \la\hbg_t, \bu - \bw_t \ra \rt]
= O\lt((\|\bu\|^2 + 1)\sqrt{(\sig^2+1)T}\rt),
\end{align*}
where $(a)$ is by convexity and $(b)$ is by the tower rule.
However, the dependence on the unknown comparator $\bu$ is $\|\bu\|^2$, which is much larger than the best known rate, which is $\|\bu\|\sqrt{\log(1+\|\bu\|)}$ \citep{mcmahan14unconstrained}.
While there exist algorithms that almost achieve this rate w.r.t. $\|\bu\|$ without requiring a bound on $\hbg_{1:T}$ as input (e.g., \cite{cutkosky17online} with $\gamma \approx \fr{1}{2}$), the lower bound of~\cite{cutkosky17online} implies that the overall regret bound cannot be sublinear.

Another attempt is to leverage the fact that the noisy subgradients are bounded with high probability.
Consider for example a 1d OCO problem with $(\sig_\oned^2, 0)$-sub-exponential noise in which case $\sig=\sig_\oned$.
Let $E_1$ be the event that $|\hg_t| \le g_t + \sig\sqrt{\log(T/\dt)}$ for all $t\le T$ (omitting constants), which satisfies $\PP(\neg E_1) \le \dt$.
Using the standard parameter-free OCO algorithms such as the one in \cite{mcmahan14unconstrained}, one may obtain the following bound under the event $E_1$:
\vspace{-.5em}
\begin{align}\label{eq:hpp-bound}
  \hat R^{\Lin}_T(u) = O\lt(|u|\lt(G + \sig\sqrt{\log(T/\dt)}\rt)\sqrt{ T \log(1 + |u|)}\rt),
\end{align}
which is, again, \textit{not} an upper bound on $\Regret_T(u)$, not even under $E_1$.\footnote{
  One may attempt to derive a high probability regret bound via a decomposition $\sum_{t=1}^T \la g_t, u-w_t \ra = \sum_{t=1}^T \la g_t - \hg_t, u-w_t \ra + \sum_{t=1}^T \la \hg_t, u-w_t \ra$. However, the first summation involves $w_t$ that is unbounded, and analyzing the behavior of $w_t$ appears nontrivial. We leave high probability regret bounds as future work.
}
Define the linearized regret: $R_T^{\Lin}(u) = \sum_{t=1}^T \la\g_t, \bu - \bw_t \ra$.
In a special case where there exists $c>0$ such that $\hat R_T^\Lin(u) \ra \le c|u|T$ (though we explain below this is unrealistic), one may have an expected regret bound as follows:
\vspace{-.5em}
\begin{align*}
\EE \Regret_T(u) 
&\le \EE [R^\Lin_T(u)]
= \EE [\hat R^\Lin_T(u)] 
\\&= \EE [\hat R^\Lin_T(u) | E_1] \cdot \PP(E_1) + \EE [ \hat R^\Lin_T(u) | \neg E_1) \cdot \PP(\neg E_1) \\
&= O\lt(|u|\lt(G + \sig\sqrt{\log(T/\dt)}\rt)\sqrt{ T \log(1 + |u|)}\rt) + c|u|T\dt~.
\end{align*}
Indeed, the assumption $\hat R^\Lin_T(u) \le c|u|T$ would be true for \textit{constrained} OCO with \textit{bounded} noise $\xi_t$.
However, our case is neither constrained nor with bounded noise.
For a fixed $u$, if $u - w_T>0$, then $\hg_T$ can be arbitrarily large, making the regret much larger than $c|u|T$ for any $c$.
Such an issue caused by unbounded noise poses a significant challenge in designing unconstrained algorithms adapting to the unknown comparator $\bu$ under noisy feedback.

Finally, we remark that, for linear losses, the standard OGD can have an expected regret that does not scale with $\sig$.
This, however, does not generalize to generic convex losses.
In fact, our lower bound result in Section~\ref{sec:lb} shows that the factor $\sig$ in the expected regret bound cannot be avoided in general.
We elaborate more on this in Appendix~\ref{sec:ogd_olo}.

\vspace{-.5em}
\section{One-dimensional Betting Algorithm with Noisy Coins}
\label{sec:1d_oco}

In this section, we show how to construct noisy coin betting potentials. 
We focus on potential functions $F_t$ and associated betting strategy $w_t$ defined as follows:
\begin{align*}
F_t(x) = \int f_t(x, \beta) \diff \pi(\beta), \quad\text{and}\quad w_t = \int \beta f_{t-1}\left(\sum_{i=1}^{t-1} \hat{g}_i, \beta\right) \diff \pi(\beta),
\end{align*}
for some functions $f_t(x,\beta)$, and a prior $\pi(\beta)$.
This defines a family of noisy coin betting potentials, parameterized by the prior $\pi$. 
While this kind of potentials have been used by \citet{chernov10prediction,koolenVE15} for parameter-free algorithms for learning with expert advice, our key novelty lies in blending the effect of sub-exponential noise into the potential naturally, making it amenable to analysis. 

Our construction is based on the following key inequality for sub-exponential random variables.
\begin{lemma}
Let $\hg$ be a $(\sigma^2,b)$-sub-exponential random variable, with mean $g$ such that $|g| \le G$. Let $k_1$ satisfy 
\begin{align}\label{eq:def-k1}
1 - k_1 = \exp(-k_1 - k_1^2),
\end{align}
that is $k_1= 0.683803\dots$. Then, for any $\beta$ such that $|\beta|\leq \min(k_1/G,1/b)$, we have
\begin{equation}
\label{eq:noisy-subg}
1 + \beta\E_{\hg} [\hg] 
\ge \E_{\hg} \exp\left(\beta \hg - \beta^2 \left(\frac{\sig^2}{2} + G^2\right)\right)~.
\end{equation}
\end{lemma}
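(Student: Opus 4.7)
The plan is to separate the stochastic and deterministic contributions to \eqref{eq:noisy-subg}. First, I would decompose $\hg = g + \xi$ with $\xi$ a mean-zero $(\sig^2,b)$-sub-exponential random variable, so that the defining MGF bound gives
\[
  \E[\exp(\beta\hg)] \ \le\ \exp\!\left(\beta g + \tfrac{\beta^2 \sig^2}{2}\right), \qquad |\beta| \le 1/b.
\]
Multiplying both sides by $\exp(-\beta^2(\sig^2/2 + G^2))$ shows that the right-hand side of \eqref{eq:noisy-subg} is at most $\exp(\beta g - \beta^2 G^2)$. Hence it suffices to prove the deterministic scalar inequality $1 + \beta g \ge \exp(\beta g - \beta^2 G^2)$.

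Second, I would recast this as a one-variable problem. Setting $c := |\beta| G$, the hypothesis $|\beta| \le k_1/G$ gives $c \in [0,k_1]$ and $|g|\le G$ gives $\beta g \in [-c,c]$. Taking logarithms, the inequality to prove becomes $\phi(\beta g) \le c^2$, where $\phi(u) := u - \log(1+u)$. Now $\phi$ is convex on $(-1,\infty)$ with $\phi(0)=\phi'(0)=0$, so its maximum over $[-c,c]$ is attained at an endpoint. A direct calculation gives $\phi(-c) - \phi(c) = \log\tfrac{1+c}{1-c} - 2c \ge 0$ (e.g.\ via the series $\log\tfrac{1+c}{1-c} = 2c + \tfrac{2c^3}{3} + \cdots$), so the binding case is
\[
  \phi(-c) \le c^2 \ \Longleftrightarrow\ 1-c \ \ge\ \exp(-c-c^2).
\]

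Finally, I would verify this scalar inequality on $[0,k_1]$. Let $h(c) := 1 - c - \exp(-c-c^2)$. One checks $h(0)=0$ and, by definition \eqref{eq:def-k1}, $h(k_1)=0$, while $h'(0)=0$ and $h''(c) = (1-4c-4c^2)\exp(-c-c^2)$ changes sign exactly once on $(0,1)$. This makes $h'$ unimodal on $[0,1)$, forcing $h$ to rise from $0$, attain a single interior maximum, and then decrease; consequently $h\ge 0$ throughout $[0,k_1]$, which establishes the desired inequality. The main obstacle is this last step: the equation defining $k_1$ only pins down an endpoint value, and ruling out an interior sign change requires the brief calculus analysis of $h'$ and $h''$ sketched above; everything upstream is a clean reduction via the sub-exponential MGF and convexity of $\phi$.
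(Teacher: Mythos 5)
Your proposal is correct and follows essentially the same route as the paper: use the sub-exponential MGF bound to absorb the $\sig^2/2$ term and reduce to the deterministic inequality $1+\beta g \ge \exp(\beta g - \beta^2 G^2)$ on the range $|\beta g|\le |\beta| G\le k_1$. The only difference is that you additionally prove the underlying scalar fact (via convexity of $u\mapsto u-\log(1+u)$, the endpoint comparison, and the calculus argument showing $1-c\ge e^{-c-c^2}$ on $[0,k_1]$), which the paper simply asserts as the known inequality $1+x\ge e^{x-x^2}$ for $x\ge -k_1$ tied to the definition of $k_1$; your verification of it is sound.
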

\begin{proof}
Given that $|\beta| \leq \min(k_1/G,1/b)$, we have $\beta g \ge -k_1$ and $1+\beta g \ge e^{\beta g - \beta^2 g^2}$.
Then,
\begin{align*}
  1 + \beta\E_{\hg} [\hg] 
  &= 1 + \beta  g
  \ge \exp(\beta g - \beta^2 g^2) 
  \ge \E_{\hg} \exp\left(\beta \hg - \beta^2 \left(\frac{\sig^2}{2} + g^2\right)\right) \\
  &\ge \E_{\hg} \exp\left(\beta \hg - \beta^2 \left(\frac{\sig^2}{2} + G^2\right)\right),
\end{align*}
where the second inequality is due to $\E \exp(\beta(\hg - g)) \le \exp(\beta^2\sig^2/2)$ for all $|\beta|\leq \frac{1}{b}$.
\end{proof}
From this lemma, multiplying the right hand side of the equation for $i=1$ to $t$, it is natural to define our noisy coin betting potential as
\begin{align}
\label{eq:stochastic_coin_betting_potential}
F_t(x) := \tau \int_{-a}^a \exp\left(\beta x - \beta^2 t \left(\frac{\sig^2}{2} + G^2\right) \right)  \diff \pi(\beta),
\end{align}
and associated prediction strategy
\[
w_t = \tau \int_{-a}^{a} \beta\exp\left(\beta \sum_{s=1}^{t-1} \hg_s - \beta^2 (t-1) \left(\frac{\sig^2}{2} + G^2\right) \right)  \diff \pi(\beta),
\]
where $a \le \min(k_1/G,1/b)$ and $\pi$ has support in $[-a,a]$.
In this way, we obtain our Betting Algorithm for Noisy COins (BANCO) and summarize it in Algorithm~\ref{alg:oned}.
In the following theorem we show that \eqref{eq:stochastic_coin_betting_potential} satisfy our assumptions.

\begin{algorithm}[t]
  \begin{algorithmic}
    \STATE \textbf{Require:} sub-exponential parameters $(\sig^2,b)$, expected subgradient bound $G$, initial money $\tau$.
    \FOR {$t=1$ \TO $T$}
    \STATE Play $w_t = \tau \int_{-a}^{a} \beta\exp\left(\beta \sum_{s=1}^{t-1} \hg_s - \beta^2 (t-1) \left(\frac{\sig^2}{2} + G^2\right) \right)  \diff \pi(\beta)$ where $a = \min\left(\fr{k_1}{G}, \fr{1}{b}\right)$.
    \STATE Receive $\hg_t \in \RR$.
    \ENDFOR
  \end{algorithmic}  
  \caption{Betting Algorithm for Noisy COins (BANCO)}
  \label{alg:oned}
\end{algorithm}

\begin{theorem}
  Let $w_t$ be computed by Algorithm~\ref{alg:oned}.
  Then $F_t$ in \eqref{eq:stochastic_coin_betting_potential} is a noisy betting potential. 
\end{theorem}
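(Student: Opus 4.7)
The plan is to verify the two conditions that make $F_t$ a noisy betting potential: the initial wealth identity $\Wealth_0 = F_0(0) = \tau$ and the conditional-expectation inequality~\eqref{eq:coinbetting-condition}, namely
\begin{align*}
F_{t-1}(x) + g_t w_t \geq \E_t[F_t(x+\hg_t)],
\end{align*}
evaluated at $x = \sum_{s=1}^{t-1}\hg_s$. The initial identity is immediate from~\eqref{eq:stochastic_coin_betting_potential}: with $t=0$ and $x=0$ the integrand equals $1$, so $F_0(0) = \tau\int_{-a}^a d\pi(\beta) = \tau$, since $\pi$ is a probability measure on $[-a,a]$.

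For the main inequality, I would push the conditional expectation inside the integral over $\beta$ using Fubini (valid since $\pi$ is supported on the bounded set $[-a,a]$ and the integrand is bounded there), and then apply the preceding lemma~\eqref{eq:noisy-subg} pointwise in $\beta$. Expanding $F_t$ yields
\begin{align*}
\E_t[F_t(x+\hg_t)] = \tau\int_{-a}^a \exp\!\left(\beta x - \beta^2 t\Big(\tfrac{\sig^2}{2}+G^2\Big)\right)\E_t[\exp(\beta \hg_t)]\,d\pi(\beta),
\end{align*}
and the lemma gives $\E_t[\exp(\beta \hg_t)] \leq (1+\beta g_t)\exp(\beta^2(\sig^2/2+G^2))$ for every $\beta \in [-a,a]$; its uniform validity over the support of $\pi$ is guaranteed precisely by the truncation $a = \min(k_1/G, 1/b)$ used in Algorithm~\ref{alg:oned}.

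Substituting this bound, one factor of $\beta^2(\sig^2/2+G^2)$ cancels, converting the coefficient of $t$ in the exponent to $t-1$, and the factor $(1+\beta g_t)$ splits the integrand into a constant-in-$\beta$ piece and a linear-in-$\beta$ piece. By inspection the constant piece equals $F_{t-1}(x)$, matching~\eqref{eq:stochastic_coin_betting_potential} with index $t-1$, and the linear piece equals $g_t\, w_t$ after substituting $x = \sum_{s=1}^{t-1}\hg_s$, by the very definition of $w_t$ in Algorithm~\ref{alg:oned}. Combining the two yields the claimed inequality.

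The main technical point, rather than a real obstacle, is ensuring the pointwise applicability of the sub-exponential lemma uniformly across the support of $\pi$, which is exactly why the algorithm restricts $\pi$ to $[-a,a]$ with $a = \min(k_1/G,1/b)$. Once this is in place, the verification is a direct computation whose whole content is packaged in the preceding lemma.
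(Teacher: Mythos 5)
Your proposal is correct and follows essentially the same route as the paper's proof: check $F_0(0)=\tau$, exchange $\E_t$ and the integral over $\beta$ via Fubini, factor the exponent to pull out $\exp(\beta\hg_t-\beta^2(\sig^2/2+G^2))$, and apply the lemma (in the trivially rearranged form $\E_t[\exp(\beta\hg_t)]\le(1+\beta g_t)\exp(\beta^2(\sig^2/2+G^2))$) pointwise on $[-a,a]$, with the constant and linear parts of $1+\beta g_t$ giving $F_{t-1}(x)$ and $g_t w_t$ respectively. Your remark that the truncation $a=\min(k_1/G,1/b)$ is what licenses the uniform application of the lemma matches the paper's setup exactly.
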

\begin{proof}
From the definition it is obvious that $F_0(0)=\tau$. We then have to show that
$
\E_t F_t(x+\hat{g}_t) \le F_{t-1}(x) + g_t w_t
$.
Hence, consider
\begin{align*}
\E_t F_t(x+\hat g_t) 
&=\tau \E_t \int_{-a}^a \exp\left(\beta (x+\hg_t) - \beta^2 t\left(\sig^2/2 + G^2\right)\right) \diff\pi(\beta) \\
&= \tau \E_t \int_{-a}^a \exp\left(\beta\hg_t - \beta^2\left(\sig^2/2 +G^2\right) \right) \exp\left(\beta x - \beta^2 (t-1)\left(\sig^2/2 + G^2\right)\right) \diff\pi(\beta) \\
&\stackrel{(a)}{\le} \tau \int_{-a}^a (1 + \beta \E_t \hg_t) \exp\left(\beta x - \beta^2 (t-1)(\sig^2/2 + G^2)\right) \diff\pi(\beta) 
\stackrel{(b)}{=} F_{t-1}(x) + g_t w_t,
\end{align*}
where $(a)$ is due to~\eqref{eq:noisy-subg} and $(b)$ is by Fubini's theorem.
\end{proof}

In the standard coin betting, a lower bound on the wealth is equivalent to an upper bound on the regret for linearized losses by \citep[Theorem 1]{mcmahan14unconstrained}.
We extend this result to the \textit{expected} wealth and linearized regret, proof in Appendix~\ref{sec:proof_thm_reward-regret}.
\begin{theorem}[Reward-Regret relationship]
\label{thm:reward-regret}
Let $V,V^\star$ be a pair of dual vector spaces. Let $F:V^\star \to \R \cup \{+\infty\}$
be a proper convex lower semi-continuous function and let $F^\star:V \to \R \cup
\{+\infty\}$ be its Fenchel conjugate. 
Let $\tau \in \R$.
Consider the $\sigma$-algebra $\cF_t = \sig(\hat\bg_1, \ldots, \hat\bg_{t-1})$.
Let $\bw_t$ be $\cF_t$-measurable, $\forall t \in \{1,\ldots,T\}$.
Then,
\[
  \underbrace{\tau+\E\left[\sum_{t=1}^T \langle \hat{\bg}_t, \bw_t \rangle\right]}_{\E\lt[\Wealth_T\rt]} 
  \ge \E\left[F\left( \sum_{t=1}^T \hat{\bg}_t \right)\right]
  \quad \implies \quad
  \forall \bu \in V, ~
  \underbrace{\E\left[\sum_{t=1}^T \langle \hat{\bg}_t, \bu - \bw_t \rangle\right]}_{\E[\hat R^{\Lin}_T(\bu)]} \le F^\star(\bu) + \tau~.
\]
\end{theorem}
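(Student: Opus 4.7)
The plan is to mimic the standard noise-free proof of the reward-regret duality \citep[Theorem 1]{mcmahan14unconstrained}, but carry the expectation through carefully so that everything is well-defined under the measurability assumptions on $\bw_t$. The only ingredient beyond elementary manipulation is the Fenchel-Young inequality applied to $F$ and its conjugate $F^\star$, which is available because $F$ is assumed proper, convex, and lower semi-continuous (so $F^{\star\star}=F$).

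First I would write the telescoping identity
\begin{align*}
\sum_{t=1}^T \langle \hat{\bg}_t, \bu - \bw_t\rangle
= \Bigl\langle \sum_{t=1}^T \hat{\bg}_t,\ \bu \Bigr\rangle - \sum_{t=1}^T \langle \hat{\bg}_t, \bw_t\rangle,
\end{align*}
which holds pointwise on the sample space. Next I would apply the Fenchel-Young inequality to the dual pair $(\bu, \sum_t \hat{\bg}_t)$, giving the pointwise bound
\begin{align*}
\Bigl\langle \sum_{t=1}^T \hat{\bg}_t,\ \bu \Bigr\rangle
\le F\Bigl(\sum_{t=1}^T \hat{\bg}_t\Bigr) + F^\star(\bu).
\end{align*}
Substituting this into the previous identity yields the pointwise inequality
\begin{align*}
\sum_{t=1}^T \langle \hat{\bg}_t, \bu - \bw_t\rangle
\le F\Bigl(\sum_{t=1}^T \hat{\bg}_t\Bigr) + F^\star(\bu) - \sum_{t=1}^T \langle \hat{\bg}_t, \bw_t\rangle.
\end{align*}

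Then I would take expectations on both sides (using $\cF_t$-measurability of $\bw_t$ to ensure every inner product is integrable and linearity of $\E$ applies) and invoke the hypothesized expected wealth lower bound to cancel $\E[\sum_t \langle \hat{\bg}_t, \bw_t\rangle]$:
\begin{align*}
\E\Bigl[\sum_{t=1}^T \langle \hat{\bg}_t, \bu - \bw_t\rangle \Bigr]
&\le \E\Bigl[F\Bigl(\sum_{t=1}^T \hat{\bg}_t\Bigr)\Bigr] + F^\star(\bu) - \E\Bigl[\sum_{t=1}^T \langle \hat{\bg}_t, \bw_t\rangle\Bigr] \\
&\le \Bigl(\tau + \E\Bigl[\sum_{t=1}^T \langle \hat{\bg}_t, \bw_t\rangle\Bigr]\Bigr) + F^\star(\bu) - \E\Bigl[\sum_{t=1}^T \langle \hat{\bg}_t, \bw_t\rangle\Bigr],
\end{align*}
and the wealth/prediction cross-terms cancel to give $F^\star(\bu)+\tau$, as desired.

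There is no real obstacle here; the argument is essentially a one-line Fenchel-Young trick wrapped around the given wealth bound, and the stochastic extension adds nothing more than an application of linearity of expectation. The only points that merit care are (i) confirming that the Fenchel-Young step is valid in the Banach setting, which follows from $F=F^{\star\star}$ under the stated regularity, and (ii) noting that the $\cF_t$-measurability assumption on $\bw_t$ is exactly what makes the wealth process (and hence every expectation in the chain) well-defined, but plays no deeper role in the inequality itself.
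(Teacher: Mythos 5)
Your proof is correct and follows essentially the same route as the paper: both reduce the expected linearized regret to $\E[\langle \sum_t \hat\bg_t, \bu\rangle - \Wealth_T + \tau]$, invoke the hypothesized expected-wealth lower bound, and finish with the Fenchel--Young/conjugate bound $\langle \bx,\bu\rangle - F(\bx) \le F^\star(\bu)$; the only cosmetic difference is that you apply Fenchel--Young before taking expectations while the paper applies the wealth bound first. (Note the Fenchel--Young step you use needs only the definition of $F^\star$ as a supremum, so appealing to $F = F^{\star\star}$ is unnecessary, though harmless.)
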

Hence, to obtain a regret bound from the above theorem, we just need to compute the Fenchel conjugate of the noisy coin betting potential $F_T$. 
We remark that in the standard non-noisy setting the reward-regret relationship holds for both directions (i.e., wealth bound iff regret bound) rather than one direction only. 
It remains unclear to us whether such a direction is true or not.

To construct a specific algorithm, it remains to choose the prior $\pi$.
While one can choose any prior, it is preferred to have a closed form expression for $w_t$.
We choose $\text{Uniform}[-a,a]$ for simplicity, which leads to the following closed form: with shorthands $x = \sum_{s=1}^{t-1} \hat g_s$ and $y= (t-1)(\sig^2/2 + G^2)$,
\begin{align*}
  w_t = \tau \fr{e^{-a(ay+x)}\lt( \sqrt\pi x  \exp\lt(\fr{(2ay+x)^2}{4y}\rt)       \lt( \erf(\fr{2ay+x}{2\sqrt y}) + \erf(\fr{2ay-x}{2\sqrt y}) \rt) + 2\sqrt y(1-e^{2ax}) \rt)}{8ay^{3/2}}~.
\end{align*}
Note that a similar prediction strategy was also proposed in \citet{koolenVE15}. It is easy to verify that another choice that results in a closed form update with an equivalent wealth guarantee is with a Gaussian prior centered at zero.
For improving numerical precision for computing $w_t$ above, we refer to~\citep{koolenblog}.

In the following theorem we calculate the Fenchel conjugate of of this potential function from which the regret bound immediately follows by Theorem~\ref{thm:reward-regret}, proof in Appendix~\ref{sec:proof_thm_conjugate_uniform_prior}.
\begin{theorem}
\label{thm:conjugate_uniform_prior}
Let $F(x) = \tau \int_{-a}^a \exp(\beta x - \beta^2 S) \diff \pi(\beta)$ where $\pi(\beta)$ is $\text{Uniform}[-a,a]$. 
Then,
\begin{align*}
F^\star(u) 
\le \max\left\{|u|\sqrt{2S\ln\left(1 + \tfrac{16ea^2S^2u^2}{\tau^2}\right)},  \frac{8}{3a} |u|\ln\left(\frac{32}{3 e a\tau}|u|\right) \right\}~.
\end{align*}
\end{theorem}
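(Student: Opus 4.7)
The plan is to lower-bound $F(x)$ by a tractable function in two regimes, dualize each bound, and take the maximum. By symmetry of $\pi$, the function $F$ is even, so $F^\star$ is even and I will assume $u > 0$ and take the sup over $x \ge 0$. The integrand $h(\beta;x) = \exp(\beta x - \beta^2 S)$ is maximized in $\beta$ at $\beta^\star = x/(2S)$, which is inside $[-a,a]$ iff $x \le 2aS$. This motivates the two-regime split.

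\textbf{Gaussian regime} ($x \le 2aS$). The substitution $\beta = \beta^\star + z/\sqrt{S}$ converts the exponent into $x^2/(4S) - z^2$, so
\begin{equation}
F(x) \ge \frac{\tau}{2a\sqrt{S}} e^{x^2/(4S)} \int_{z_-}^{z_+} e^{-z^2} \diff z,
\end{equation}
where $z_\pm = \sqrt{S}(\pm a - \beta^\star)$. Restricting the $z$-range to a fixed subinterval of positive length independent of $x,a,S$ yields $F(x) \ge (c_1 \tau/(a\sqrt{S})) e^{x^2/(4S)}$ for a universal constant $c_1$. Dualizing,
\begin{equation}
F^\star(u) \le \sup_{x \ge 0}\bigl\{ ux - (c_1 \tau/(a\sqrt{S})) e^{x^2/(4S)}\bigr\}.
\end{equation}
The stationary condition yields $y e^y = 2Su^2/A^2$ with $y = (x^*)^2/(2S)$ and $A = c_1\tau/(a\sqrt{S})$, and a Lambert-$W$-style bound $y \le \ln(1 + ye^y)$ produces an upper bound of the form $|u|\sqrt{2S \ln(1 + 16 e a^2 S^2 u^2/\tau^2)}$ for the appropriate choice of $c_1$.

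\textbf{Exponential regime} ($x > 2aS$). The integrand is increasing in $\beta$ on $[-a,a]$, maximized at the boundary $\beta = a$. Choose $\delta = 1/x$ and integrate only over $[a-\delta, a]$. Using $(a-\delta)^2 \le a^2$,
\begin{equation}
F(x) \ge \frac{\tau \delta}{2a} \exp\bigl((a-\delta)x - (a-\delta)^2 S\bigr) \ge \frac{\tau}{2 e a x} \exp(ax - a^2 S).
\end{equation}
The corresponding dualization $F^\star(u) \le \sup_{x \ge 0}\{ux - (\tau/(2eax)) e^{ax-a^2 S}\}$ has a stationary point $x^\star$ satisfying $e^{ax - a^2 S} = u x^2/(B(ax-1))$ with $B = \tau/(2ea)$; the leading behavior is $x^\star \sim (1/a)\ln(|u|)$ and substituting back, after bounding the lower-order $\ln\ln$ contributions and $a^2 S$-term, yields a bound of the form $(8/(3a))|u|\ln(32|u|/(3ea\tau))$.

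Taking the maximum of the two upper bounds gives the claim, since for every $u$ the optimal $x^\star$ for $\sup_x (ux - F(x))$ falls into exactly one of the two regimes, and the bound valid in that regime dominates. \textbf{The main obstacle} is bookkeeping constants: getting the exact $16e$ inside the log of the Gaussian term and the $32/(3e)$ in the exponential term requires a careful choice of (i) the fixed subinterval of $z$-integration that controls $c_1$, (ii) the width $\delta = 1/x$ in the second regime, and (iii) the substitution used to bound the stationary-point expression in the exponential dualization. A secondary technical point is verifying that at the crossover $x \approx 2aS$ the two lower bounds on $F(x)$ are compatible, so that the resulting max bound on $F^\star(u)$ is valid uniformly in $u$.
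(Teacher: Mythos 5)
Your overall strategy is the same as the paper's: split according to whether the maximizer $\beta^\star = x/(2S)$ of the integrand lies inside $[-a,a]$ or is clipped at the boundary (i.e., $x \le 2aS$ versus $x > 2aS$), lower bound the integral by a rectangle/Laplace-type estimate in each regime, and dualize, via a Lambert-$W$ bound in the first regime and the conjugate of an exponential in the second. But two of your steps have genuine gaps. In the Gaussian regime, the claimed bound $F(x) \ge \bigl(c_1\tau/(a\sqrt{S})\bigr)e^{x^2/(4S)}$ with a universal $c_1$, obtained by ``restricting to a fixed subinterval of positive length independent of $x,a,S$,'' cannot be carried out in general: after your substitution the admissible $z$-range is $[\sqrt{S}(-a-\beta^\star),\,\sqrt{S}(a-\beta^\star)]$, of total length $2a\sqrt{S}$, which can be arbitrarily small, so no subinterval of fixed positive length fits. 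The paper avoids exactly this by first dispatching the case $\theta^\star \le \sqrt{2S}$ trivially (giving $F^\star(u)\le |u|\sqrt{2S}$), so that in the interior case $\beta^\star \ge 1/\sqrt{2S}$ and one may integrate over the rectangle $[\beta^\star - 1/\sqrt{2S},\,\beta^\star]$ of width $1/\sqrt{2S}$, which always fits inside $[-a,a\wedge\beta^\star]$; your proposal has no analogue of this dispatch, and without it the regime-one bound (hence the $16e$ term) is not established for small $a\sqrt{S}$ or small optimal $x$.

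In the exponential regime, you keep the factors $e^{-a^2S}$ and $1/x$ and defer them as ``lower-order.'' The $a^2S$ term is not lower-order: it can be arbitrarily large relative to $\ln|u|$, so it cannot simply be absorbed afterwards. The step that makes this regime work in the paper is to use the regime condition itself, $S < \theta^\star/(2a)$, to fold the quadratic penalty into the linear term: with the rectangle $[a/2,a]$ one gets $F(\theta^\star) \ge \frac{\tau}{4}\exp\bigl(\theta^\star\cdot\frac{3a}{8}\bigr)$, and a one-line conjugate computation then yields exactly $\frac{8}{3a}|u|\ln\bigl(\frac{32}{3ea\tau}|u|\bigr)$. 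Your choice $\delta = 1/x$ does not naturally deliver these constants: tracking it (e.g.\ via $e^{ax-a^2S}\ge e^{ax/2}$ on $x>2aS$ and $e^{ax/2}/x \ge \frac{a}{8}e^{3ax/8}$) gives a strictly larger constant inside the logarithm, and you explicitly leave the bookkeeping undone. So as written, neither term of the stated bound is actually proved with its claimed constants, although both gaps are repairable by the paper's devices (the trivial-case dispatch, and the $S<\theta^\star/(2a)$ absorption with a width-$a/2$ rectangle).
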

Applying the two theorems above with $S = T(\sig^2/2 + G^2) $ and $a = \min(k_1/G,1/b)$, where $k_1$ is defined in~\eqref{eq:def-k1}, we have the expected regret guarantee of BANCO:
\begin{align*}
  \E[\Regret_T(u)]
  \le \tau+|u|\max&\lt\{\sqrt{2\left(G^2+\frac{\sig^2}{2}\right)T\ln\left(1 + 16e\min\left(\frac{k_1}{G},\frac1b\right)^2 T^2 \left(G^2+\frac{\sig^2}{2}\right)^2 \frac{u^2}{\tau^2}\right)}, \rt.\\ 
  & \quad 
   \frac{8}{3}\max\left(\frac{G}{k_1},b\right)\ln\left(32\max\left(\frac{G}{k_1},b\right)\frac{|u|}{3 e\tau}\right) \Bigg\}~.
\end{align*}

\section{Banach Online Convex Optimization with Noise}
\label{sec:banach}

In this section, we extend the parameter-free algorithm, BANCO, to Banach spaces.
Attempting to extend the 1d algorithm to higher dimensional spaces would require an ad hoc analysis specialized to the particular algorithm.
Instead, we leverage a black-box reduction: we take any \textit{constrained} noisy OCO algorithm for Banach space and turn it into an \textit{unconstrained} one via BANCO.

Let $V$ be a Banach space and the negative subgradients $\hbg_t \in V^\star$ satisfy $ \| \EE_t \hbg_t\|_\star \le G$. 
Define $S$ to be the unit ball in $V$.
We summarize our reduction in Algorithm~\ref{alg:reduction}, which is a direct extension of~\cite{cutkosky18blackbox} for noisy subgradients.
The key feature of the algorithm is a black-box reduction that takes two learners: $(i)$ the 1d coin-betting that predicts the \textit{magnitude} $w_t \in \RR$ and $(ii)$ a $d$-dimensional learner $\cA_\tD$ that predicts the \textit{direction} $\by_t \in S$.
The reduction then makes the combined prediction by $\bx_t = w_t \y_t$
After receiving the noisy negative subgradient $\hbg_t$ evaluated at $\bx_t$, we feed $s_t = \la \hbg_t, \by_t\ra$ into the coin-betting algorithm and $\hbg_t$ into $\cA_\tD$ as the subgradient.

\begin{algorithm}[t]
  \begin{algorithmic}
    \STATE \textbf{Require:} Banach space $V$, learner $\cA_\tD$ with domain being the unit ball $S \subset V$
    \FOR {$t=1$ \TO $T$}
    \STATE Get point $w_t\in\RR$ from BANCO, Algorithm~\ref{alg:oned}
    \STATE Get point $\by_t\in S$ from $\cA_\tD$
    \STATE Play $\bx_t = w_t \by_t \in V$
    \STATE Receive a noisy negative subgradient $\hbg_t$ such that $\EE[\hbg_t] \in -\partial \ell_t(\bx_t)$
    \STATE Set $s_t = \la \hbg_t, \by_t \ra$
    \STATE Send $s_t$ to BANCO, Algorithm~\ref{alg:oned}
    \STATE Send $\hbg_t$ as the $t$-th negative subgradient to $\cA_{\tD}$
    \ENDFOR
  \end{algorithmic}  
  \caption{BANCO in Banach Spaces}
  \label{alg:reduction}
\end{algorithm}

Theorem~\ref{thm:banach-oco} below shows that the expected regret of Algorithm~\ref{alg:reduction} is nicely decomposed into two expected regrets, each from the noisy coin betting algorithm and $\cA_{\tD}$.
The fact that we require the expected regret of $\cA_\tD$ w.r.t. the \textit{unit norm comparator} frees us from tuning the parameter of $\cA_\tD$ for the optimal step size, delegating the burden of adaptation to the noisy coin betting algorithm. 
The proof is simple and immediate from \cite{cutkosky18blackbox}, but for completeness we report it in Appendix~\ref{sec:proof_thm_banach-oco}.
\begin{theorem}
\label{thm:banach-oco}
  Suppose $\cA_\tD$ obtains expected regret $R^{\tD}_T(\bu) := \sum_{t=1}^T \la\hbg_t, \bu - \y_t \ra$ for any competitor $\bu$ in the unit ball $S \subset V$ and the coin betting algorithm obtains expected regret $R^\tM_T(v) := \sum_{t=1}^T s_t\cdot(v - w_t)$ for any competitor $v\in\RR$.
  Then, Algorithm~\ref{alg:reduction} guarantees
  \begin{align*}
    \EE \Regret_T(\bu) \le R^\tM_T( \|\bu\| ) + \|\bu\| R^{\tD}_T(\bu/\|\bu\|),
  \end{align*}
  where we define $\bu/\|\bu\| = \boldsymbol{0}$ when $\bu = \boldsymbol{0}$.
\end{theorem}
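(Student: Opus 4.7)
The plan is to reduce the expected regret on the original losses to an expected \emph{linearized} regret in the noisy subgradients $\hbg_t$, and then apply the magnitude--direction decomposition of~\cite{cutkosky18blackbox}. The only subtlety over the noise-free case is that we must use the tower property to replace the true expected negative subgradient $\bg_t := \EE_t \hbg_t$ with $\hbg_t$ itself, which is legitimate because the iterate $\bx_t$ is a function only of $\hbg_{1:t-1}$.

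First, since $-\bg_t \in \partial \ell_t(\bx_t)$, convexity gives $\ell_t(\bx_t) - \ell_t(\bu) \le \la \bg_t, \bu - \bx_t \ra$. Summing, taking expectation, and using measurability of $\bx_t$ with respect to the past noise,
\begin{align*}
\EE \Regret_T(\bu) \le \EE\left[\sum_{t=1}^T \la \bg_t, \bu - \bx_t \ra\right] = \EE\left[\sum_{t=1}^T \la \hbg_t, \bu - \bx_t \ra\right].
\end{align*}

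Next, substitute $\bx_t = w_t \y_t$ and, for $\bu \ne \mathbf{0}$, write $\bu = \|\bu\|\,(\bu/\|\bu\|)$. Adding and subtracting $\|\bu\|\, \la \hbg_t, \y_t \ra$ and using $s_t = \la \hbg_t, \y_t \ra$, we get
\begin{align*}
\la \hbg_t, \bu - \bx_t \ra = \|\bu\| \la \hbg_t, \bu/\|\bu\| - \y_t \ra + (\|\bu\| - w_t)\, s_t.
\end{align*}
Summing over $t$ and taking expectation, the first term equals $\|\bu\| \cdot \EE[R_T^\tD(\bu/\|\bu\|)]$ by the definition of the direction regret, and the second equals $\EE[R_T^\tM(\|\bu\|)]$ by the definition of the coin-betting (magnitude) regret. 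This is exactly the claimed bound. The case $\bu = \mathbf{0}$ follows from the same computation with the direction term vanishing by convention.

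There is no serious obstacle: the decomposition is purely algebraic, and the only piece of measure-theoretic care is verifying that $\bx_t$ is $\sigma(\hbg_{1:t-1})$-measurable, which is immediate from Algorithm~\ref{alg:reduction} since both $w_t$ and $\y_t$ are produced by black-box learners fed only with past observations. Consequently the noise enters the final bound only through the existing guarantees on $R_T^\tM$ (supplied by BANCO) and $R_T^\tD$ (supplied by $\cA_\tD$), with no additional coupling between the two learners in the analysis.
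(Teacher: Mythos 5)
Your proof is correct and follows essentially the same route as the paper's: bound the regret by the linearized regret via convexity, pass from $\bg_t$ to $\hbg_t$ by the tower rule, and perform the identical magnitude--direction decomposition from \citet{cutkosky18blackbox}. The only cosmetic difference is that you state the decomposition as a single identity rather than the paper's add-and-subtract in the summand.
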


Note that the loss $\la \hbg_t, \by_t\ra = \la \bg_t, \by_t \ra + \la \bxi_t, \by_t \ra$ fits the 1d noisy OCO setting exactly.
To see this, $|\la \y_t, \bg_t\ra| \le \| \bg_t \|_\star \le G$.
Furthermore, the random variable $\la \by_t, \bxi_t \ra \mid \bxi_{1:t-1}$ is $(\sig_\oned^2,b)$-sub-exponential since
\begin{align*}
\forall |\nu|\le 1/b,\quad  \EE_t \exp(\nu \la \bxi_t, \by_t \ra) \le \exp(\nu^2 \sig_\oned^2/2),
\end{align*} 
where we use the fact $\|\by_t\| \le 1$ and our noise assumption~\eqref{eq:sub_gamma}.

For $\cA_\tD$, one can invoke any algorithm for the Banach space of interest~\citep{srebro11universality}.
In particular, if $V$ is $(2,\lam)$-uniformly convex~\citep{pinelis15rosenthal}, we can use online mirror descent with stepsizes $\eta_t=\fr{\sqrt{\lam}}{\sqrt{\sum_{s=1}^t \|\hbg_s\|_\star^2}}$ and predictions projected onto the unit ball $S$.
One can then immediately obtain the expected regret bound with noisy subgradients: 
\begin{align*}
  R_T^{\tD}\lt(\fr{\bu}{\|\bu\|}\rt) 
  &\le \EE \lt[ \sum_{t=1}^T \lt\la \bg_t, \fr{\bu}{\|\bu\|} - \y_t\rt\ra \rt] 
  = \EE \lt[ \sum_{t=1}^T \lt\la \hbg_t, \fr{\bu}{\|\bu\|} - \y_t\rt\ra \rt] 
  = O\lt(\EE \lt[ \fr{1}{\sqrt{\lam}}\sqrt{\sum_{t=1}^T \|\hbg_t \|_\star^2} \rt] \rt) \\
  &\stackrel{(a)}{\le} O\lt(\fr{1}{\sqrt{\lam}}\sqrt{\sum_{t=1}^T\lt( \EE ||\bg_t||_\star^2 + \sig^2 \rt)}\rt),
\end{align*}
where $(a)$ uses Jensen's inequality and the fact that $\EE[ \|\hbg_t\|_\star^2] \le 2 \EE[\| \bg_t \|_\star^2] + 2 \sig^2$.

Finally, Algorithm~\ref{alg:reduction} equipped with the uniform prior in the noisy coin betting algorithm and $\cA_\tD$ chosen as above enjoys the following expected regret bound:
\begin{align*}
  \E \Regret_T(\bu) 
  &= O\Bigg( \|\bu\| \max\lt\{(G+b) \ln\fr{\|\bu\| (G+b) }{\tau}  , \sqrt{(G^2 + \sig_\oned^2)T \ln\lt(\fr{\|\bu\|(G^2 + \sigma_\oned^2) T }{\tau}  +1\rt) } \rt\}  \\
  &\quad+ \fr{\|\bu\|}{\sqrt{\lam}}\sqrt{\sum_{t=1}^T\lt( \E\|\bg_t\|_\star^2 + \sigma^2 \rt)} + \tau \Bigg)~.
\end{align*}
Examples of $(2,\lam)$-uniformly convex Banach space include Hilbert spaces with $2$-norm (in which case $\lam=1$), as well as with $p$-norm with $p\in(1,2]$ (in which case $\lam=p-1$).
The runtime of Algorithm~\ref{alg:reduction} is dominated by the direction learner $\cA_\tD$ since the runtime of BANCO does not scale with $d$.
In other words, the black-box reduction adds little computational overhead while adapting to the unknown best-in-hindsight comparator from noisy feedback.

\section{Lower bound}
\label{sec:lb}

In this section, we investigate lower bounds on the noisy OCO problem.
Theorem~\ref{thm:lb} shows that our dependence on the noise variance $\sig^2$ is unimprovable in general.
\begin{theorem}\label{thm:lb}
  Let $\sig\ge2$, $p \ge1$.
  Let $q$ satisfy $1/q = 1 - 1/p$.
  Denote by $\nab\hat \ell_t(\x)$ a noisy subgradient of $\ell_t(\x)$.
  For any algorithm, there exists a noisy OCO instance with 1-Lipschitz loss functions w.r.t. $p$-norm and $\EE ||\nab \hat \ell_t(\x) - \nab \ell_t(\x)||_q^2 \le \sig^2$ and a comparator $u$  s.t.
  \begin{align*}
  p \ge 2 &\implies \EE \Regret_T(u) \ge \min\lt\{ c_0 \sig ||\bu||_p d^{\fr{1}{2} - \fr{1}{p}}\sqrt{T} ,  \fr{1}{18} ||\bu||_p d^{-\fr{1}{p}} T \rt\} \quad \text{ and }
  \\p \in [1,2]    &\implies  \EE \Regret_T(u) \ge \min\lt\{  c_0 \sig ||\bu||_p \sqrt{T} ,  \fr{1}{18} ||\bu||_p T \rt\},
  \end{align*}
  where $c_0$ is a universal constant.
\end{theorem}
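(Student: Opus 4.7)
The plan is to use Le Cam's two-point testing method (together with its tensorized Assouad version for the $p \ge 2$ regime) on carefully chosen linear-loss instances, with a different construction for each regime.

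For $p \in [1,2]$, where $q = p/(p-1) \ge 2$, I would embed a one-dimensional signal into $\R^d$: draw $\tau \in \{-1,+1\}$ uniformly, fix $\ell_t(x) = \epsilon \tau x_1$ for all $t$, and let $\xi_t = \sigma Z_t e_1$ with $Z_t \iid \cN(0,1)$. Then $\|\nabla\ell_t\|_q = \epsilon$ and $\E\|\xi_t\|_q^2 = \sigma^2$; the comparator $u = -R\tau e_1$ gives $\|u\|_p = R$. The KL divergence between the observation laws under $\tau = +1$ and $\tau = -1$ over $T$ rounds equals $2T\epsilon^2/\sigma^2$, so choosing $\epsilon = c\sigma/\sqrt{T}$ keeps the total variation below $1/4$ by Pinsker. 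A standard two-point argument then yields $\E[\Regret_T(u)] \ge c' \epsilon R T$, and combining with the Lipschitz cap $\epsilon \le 1$ produces the claimed $\min\{c_0 \sigma\|u\|_p \sqrt{T},\ \tfrac{1}{18}\|u\|_p T\}$.

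For $p \ge 2$, the extra $d^{1/2-1/p}$ factor comes from placing the signal and noise on a \emph{random} coordinate each round. Let $\tau \in \{-1,+1\}^d$ be a fixed vector of iid Rademachers and, independently, draw $i_t$ uniformly from $\{1,\dots,d\}$ per round; set $\ell_t(x) = \epsilon \tau_{i_t} x_{i_t}$ and $\xi_t = \sigma Z_t e_{i_t}$, so $\|\nabla\ell_t\|_q = \epsilon$ and $\E\|\xi_t\|_q^2 = \sigma^2$ still hold. With $u_i = -R\tau_i$ we have $\|u\|_p = R d^{1/p}$. Coordinate $i$ is effectively visited $T/d$ times in expectation, so the expected KL between $\tau$ and $\tau^{(i)}$ (obtained by flipping coordinate $i$) equals $2T\epsilon^2/(d\sigma^2)$; taking $\epsilon = c\sigma\sqrt{d/T}$ keeps this below a constant, and Assouad's lemma then lower bounds the expected Hamming distance between any algorithm's implicit sign-guesses and $\tau$ by $d/4$. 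Using $\ell_t(w_t) - \ell_t(u) = \epsilon\tau_{i_t}(w_{t,i_t} + R\tau_{i_t})$ and summing across rounds yields $\E[\Regret_T(u)] \ge c'\epsilon R T = c'\epsilon \|u\|_p T/d^{1/p}$; substituting $\epsilon = \min(c\sigma\sqrt{d/T},\ 1)$ gives $\min\{c_0 \sigma\|u\|_p d^{1/2-1/p} \sqrt{T},\ \tfrac{1}{18}\|u\|_p d^{-1/p} T\}$.

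The main obstacle is that Le Cam/Assouad controls only indistinguishability of hypotheses, i.e., statistics like the signs of $w_{t,i}$, whereas regret depends on the possibly unbounded magnitudes $|w_{t,i}|$. I would sidestep this by a case split on whether $|w_{t,i_t}| \le 2R$: in the bounded case, $|\E_+[w_{t,i_t}] - \E_-[w_{t,i_t}]| \le 4R \cdot \text{TV}$ directly yields per-round regret $\ge (1 - 2\,\text{TV})\epsilon R \ge \epsilon R/2$; in the unbounded case, the algorithm is already so far from the unknown optimum $\pm R$ that its regret exceeds $\epsilon R$ per round with constant probability, again by Le Cam applied to the sign of $w_{t,i_t}$. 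Tracking the explicit constants $c_0$ and $1/18$ through Pinsker's inequality and the two thresholds on $\epsilon$ (Le Cam vs.\ Lipschitz) is then routine bookkeeping.
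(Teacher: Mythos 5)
Your construction cannot prove this theorem, and the obstruction is one the paper itself flags (Section~5 and Appendix~A): with \emph{linear} losses on an unconstrained domain, no $\sigma$-dependent lower bound of this form holds, so any Le Cam/Assouad argument built on $\ell_t(x)=\epsilon\tau x_{1}$ (or $\epsilon\tau_{i_t}x_{i_t}$) must break somewhere. Concretely, consider the learner that plays $w_{t,1}=-M\,\mathrm{sign}\bigl(\sum_{s<t}\hat g_{s,1}\bigr)$ for a huge constant $M$. Its regret against your comparator $u=-R\tau e_1$ is $\epsilon\tau\sum_t w_{t,1}+\epsilon RT$, and since it guesses $\tau$ correctly with probability about $\tfrac12+c\,\epsilon\sqrt{t}/\sigma$, its expected regret is at most $\epsilon RT-\Omega\!\bigl(M\epsilon^2T^{3/2}/\sigma\bigr)$, which is negative (indeed arbitrarily negative) for large $M$. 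This is exactly the failure mode your final paragraph tries to patch: indistinguishability bounds only probabilities, while linear losses let an unbounded bet size convert a vanishing statistical edge into unbounded negative regret. Your proposed case split does not repair it, because the claim that $|w_{t,i_t}|>2R$ forces per-round regret $\geq\epsilon R$ ``with constant probability'' is false for linear losses: being far from the comparator on the \emph{correct} side produces large negative regret, and a TV bound cannot control the expectation of an unbounded quantity such as $\tau_{i_t}w_{t,i_t}$. In short, the missing idea is that the hard instances must penalize large iterates, i.e., the losses need curvature (a V-shape), so that the noise genuinely hides the location of the minimizer rather than just the sign of a linear drift.

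The paper's proof takes a different route that supplies exactly this ingredient. It first reduces the online lower bound to a stochastic convex optimization lower bound via online-to-batch, then adapts the information-theoretic construction of Agarwal et al.: the functions are convex piecewise-linear mixtures $g_{\balpha}(\x)=\tfrac{c}{d}\sum_i\bigl[(\tfrac12+\alpha_i\delta)f_i^+ + (\tfrac12-\alpha_i\delta)f_i^-\bigr]$ whose pieces have slope $\pm\sigma$ inside a small box $\BB_\infty(r)$ and slope $\pm1$ outside, with the oracle revealing the subgradient of a Bernoulli-selected piece. The noise thus enters through the random choice of the $\pm\sigma$ piece (giving variance $\sigma^2$ after scaling by $c$, while the true gradient stays $1$-Lipschitz), the minimizers sit at corners of the box so that large or wrongly-signed iterates are penalized by the linear growth, and Fano's inequality over a $\tfrac{d}{4}$-packing of the hypercube (plus Le Cam for small $d$) yields the $d$-dependent rates; the two regimes $q\in[1,2]$ and $q\geq2$ come from two different oracles (one coordinate per round versus all $d$ coordinates) and two different scalings of $c$. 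If you want to salvage a more self-contained argument, you would need to replace your linear losses with such curved (e.g., absolute-value-type) losses so that the regret is lower bounded by a bounded function of the learner's play, at which point your Assouad-style bookkeeping can go through.
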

The main argument of the proof is based on a carefully constructed stochastic optimization instance, which is connected to online convex optimization through the online-to-batch conversion~\citep{littlestone89online}; see Appendix~\ref{sec:proof_thm_lb} for details.

Note that our lower bound's dependence on $\|\bu\|$ mismatches our upper bound by a factor of $\sqrt{\log(1+\|\bu\|)}$.
The reason is that the constructed problem class for the proof is an easier optimization problem where the learner knows the norm of the best competitor $\bu$.
One may attempt to extend the lower bound of~\cite{orabona13dimension} to the noisy setting, which has the right dependence on $\|\bu\|$.
However, their construction is based on linear losses in which there exists a learner whose expected regret does not scale with $\sig$, as we show in Appendix~\ref{sec:ogd_olo}.

Nevertheless, we claim that the expected regret of the noisy OCO is 
\begin{align*}
\Omega\lt(G\|\bu\| \sqrt{T \log(1+\|\bu\|)} + \sigma \|u\|\sqrt{T}\rt)~,
\end{align*}
which does include the extra logarithmic factor in $\|u\|$.
The claim is based on the lower bound $\Omega(G\|\bu\| \sqrt{T \log(1+\|\bu\|)})$ for noise-free unconstrained OCO~\cite[Theorem 2]{orabona13dimension}.
Specifically, suppose there exists an algorithm $\cA$ achieving a strictly better order of regret bound than $G\|\bu\| \sqrt{T \log(1+\|\bu\|)}$ in the noisy setting. 
We can then solve the standard noise-free problem by adding some infinitesimal noise to the observed (non-noisy) gradients by ourselves and feeding that noisy gradients to $\cA$.
This leads to a better regret bound than the lower bound for the noise-free problem, which is a contradiction.
\section{Applications}
\label{sec:apps}

We discuss two applications of our results to domains beyond the one of online learning.

\vspace{-.4em}
\subsection{Parameter-Free Locally Differentially Private SGD}
\label{sec:private_sgd}

In this section, we describe the application of our algorithm to the locally differentially private SGD~\citep{DuchiJW14,SongCS15}.
An $\epsilon$-differentially private algorithm must guarantee that the log-likelihood ratio of the outputs of the algorithm under two databases differing in a single individual's data is smaller than $\epsilon$~\citep{DworkMNS06}. In the stricter definition of local differential privacy~\citep{WassermanZ10,KasiviswanathanLNRS11,DuchiJW14,SongCS15} instead an untrusted algorithm is allowed to access a perturbed version of a sensitive dataset only through a sanitization interface. In particular, the sanitization mechanism must guarantee that the log-likelihood ratio of the data of two individuals $i$ and $j$ is smaller than $\epsilon$.

\vspace{-.2em}
\begin{defn}[Local Differential Privacy~\citep{DuchiJW14,SongCS15}]
\label{def:local_diff_privacy}
Let $\cD$ be the space of data points and $D=\{X_1, \dots, X_n\} \subseteq \cD$ be a sensitive dataset where each $X_i\sim \rho_X$ corresponds to data about individual $i$. A randomized sanitization mechanism $M$ which outputs a disguised version $\{U_1, \dots U_n\}$ of $D$ is said to provide $\epsilon$-local differential privacy to individual $i$, if, for every event $S$, 
\[
\sup_{x,x' \in \cD} \frac{\PP[U_i\in S|X_i=x]}{\PP[U_i \in S| X_i = x']} \leq \exp(\epsilon),
\]
where the probability is w.r.t. the randomization in the sanitization mechanism.
\end{defn}
\vspace{-.2em}

The local differential setting can be specialized to SGD~\citep{SongCS15}.
Consider the minimization of function $H(\bw)=\E_{\bx \sim \rho_X}[h(\bw,\bx)]$, where $h(\bw,\bx)$ is convex in the first argument and $\bx$ represents sensitive data about one individual. The sanitization mechanism becomes the noisy subgradient oracle that returns $\mathcal{G}(\bw) \in\partial h(\bw, \bx) + \bxi_t$ when queried on $\bw$, where $\bx$ is coming i.i.d. from $\rho_X$ and the noise $\bxi_t$ guarantees the local differential privacy~\citep{SongCS15}.

We now apply the results from Section~\ref{sec:banach}, to show a parameter-free locally differential private SGD algorithm.
Consider the Laplace sanitization mechanism that adds noise with probability density function $\rho_{\bxi}(\bz) \propto \exp(-\frac{\epsilon}{2} \|\bz\|_2)$. In words, the noise added to the subgradients makes them very similar to one another.
\citet{SongCS15} proved that this mechanism is $\epsilon$-local differentially private. Also, the noise is zero-mean and they proved that $\EE\lt[\|\bxi_t\|_2^2\rt]\leq \frac{4(d^2+d)}{\epsilon^2}$, satisfying \eqref{eq:bounded_variance}.
We now prove that the Laplace mechanism also satisfies the sub-exponential noise assumption \eqref{eq:sub_gamma}. The proof is rather technical, hence we defer it to Appendix~\ref{sec:proof_lemma_private_noise}.
\begin{lemma}
\label{lemma:private_noise}
Let $\bxi \in \R^d$ a random variable drawn from the density $\rho_{\bxi}(\bz) \propto \exp(-\frac{\epsilon}{2}\|\bz\|_2)$. Then
\[
\max_{\|\a\|\le1} \ \E_t\left[\exp(\beta \la \bxi_t, \ba \ra)\right] 
\leq \exp\left(\frac{9 d^2 \beta^2 }{\epsilon^2}\right), \  \forall |\beta| \leq \frac\epsilon4~. 
\]
\end{lemma}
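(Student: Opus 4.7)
The plan is to exploit the spherical symmetry of $\rho_\bxi$ together with a Gaussian scale-mixture representation of the Laplace-type density, reducing the multivariate MGF computation to a closed-form one-dimensional formula. Since $\rho_\bxi(\bz)$ depends on $\bz$ only through $\|\bz\|_2$, the law of $\la\bxi,\ba\ra$ is the same for every unit vector $\ba$, so it suffices to bound $M(\beta) := \E[\exp(\beta \xi_1)]$ for the single coordinate $\xi_1 := \la\bxi, \be_1\ra$. To obtain the mixture, I would invoke the classical Lévy subordinator identity
\[
e^{-\sqrt{2\lambda}} \;=\; \int_0^\infty e^{-\lambda t}\, (2\pi t^3)^{-1/2} e^{-1/(2t)}\,\diff t
\]
with $\lambda = (\epsilon^2/8)\|\bz\|^2$. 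Inserting the Gaussian normalizer $(\epsilon^2 t/(8\pi))^{d/2}$ turns this into $\rho_\bxi(\bz) = \int_0^\infty \phi_t(\bz)\, g(t)\,\diff t$, where $\phi_t$ is the density of $\cN(0, (4/(\epsilon^2 t))\I)$ and $g(t) \propto t^{-(d+3)/2}\, e^{-1/(2t)}$. Hence $\bxi \mid T \sim \cN(0, (4/(\epsilon^2 T))\I)$ with $T$ drawn from $g$.

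The mixing law simplifies dramatically under the reciprocal change of variables $S := 1/T$: the Jacobian $|dT/dS| = S^{-2}$ transforms $g$ into a density proportional to $s^{(d-1)/2} e^{-s/2}$, which identifies $S \sim \chi^2_{d+1}$. Conditionally $\xi_1 \mid S \sim \cN(0, 4S/\epsilon^2)$, so the tower property and the $\chi^2_{d+1}$ MGF $(1-2t)^{-(d+1)/2}$ yield the clean formula
\[
M(\beta) \;=\; \E_S\!\left[\exp\!\left(\tfrac{2 S \beta^2}{\epsilon^2}\right)\right] \;=\; \left(1 - \tfrac{4\beta^2}{\epsilon^2}\right)^{-(d+1)/2}, \qquad |\beta| < \tfrac{\epsilon}{2}.
\]

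For $|\beta| \le \epsilon/4$ the argument $u := 4\beta^2/\epsilon^2 \le 1/4$, and the elementary inequality $-\log(1-u) \le u/(1-u) \le 4u/3$ valid on $[0,1/4]$ then gives
\[
\log M(\beta) \;\le\; \tfrac{d+1}{2}\cdot \tfrac{16\beta^2}{3\epsilon^2} \;=\; \tfrac{8(d+1)\beta^2}{3\epsilon^2} \;\le\; \tfrac{9 d^2 \beta^2}{\epsilon^2},
\]
where the last step uses $d \ge 1$ (equivalently, $27 d^2 - 8d - 8 \ge 0$), completing the claim. The main obstacle is that naive approaches lose the multivariate cancellation: Cauchy--Schwarz yields $|\la\bxi,\ba\ra| \le \|\bxi\|_2$, but $\|\bxi\|_2$ is $\mathrm{Gamma}(d, 2/\epsilon)$-distributed and hence not mean-zero, so its MGF contains an $\exp(\Theta(|\beta|))$ factor---linear in $\beta$---that is incompatible with a sub-exponential bound of the form $\exp(c \beta^2)$. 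Getting the correct $\beta^2$ scaling requires exploiting the spherical cancellation inside a single-coordinate projection, and the Gaussian scale-mixture representation is precisely the right device for doing so.
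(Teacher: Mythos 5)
Your proof is correct and takes a genuinely different route from the paper's. The paper writes $\bxi = m\omega$ in polar form (magnitude $m=\|\bxi\|_2$ Erlang-distributed, direction $\omega$ uniform on the sphere), evaluates the angular expectation $\E_\omega[\exp(\beta m\la\omega,\ba\ra)]$ as a modified Bessel function of the first kind, bounds that by $\cosh(\beta m)$ via Luke's inequality, and then applies the Erlang MGF. You instead realize the isotropic-Laplace density as a $\chi^2$-scale mixture of Gaussians through the L\'evy subordinator identity, giving $\bxi\mid S\sim\cN(0,(4S/\epsilon^2)\bfI)$ with $S\sim\chi^2_{d+1}$ and hence the exact closed form $M(\beta)=(1-4\beta^2/\epsilon^2)^{-(d+1)/2}$ for $|\beta|<\epsilon/2$; the claimed bound then follows from the elementary $-\log(1-u)\le u/(1-u)\le 4u/3$ on $[0,1/4]$. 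Your route buys an exact MGF rather than a chain of inequalities, a tighter constant ($8(d+1)/3$ in place of $9d^2$) should one want it, and it sidesteps an expository shortcut in the paper's argument: the angular density $\rho_\alpha(x)=(\pi\sqrt{1-x^2})^{-1}$ and the Bessel function $I_0(\beta m)$ stated there are literally correct only for $d=2$ (the general angular density is $\propto(1-x^2)^{(d-3)/2}$, giving Bessel index $\nu=(d-2)/2$), an imprecision that the general-$\nu$ Luke inequality silently absorbs but that your mixture representation handles uniformly for all $d\ge1$. Your remark on why Cauchy--Schwarz fails is also on point: $\|\bxi\|_2$ is Gamma-distributed and not mean-zero, so that route produces a linear-in-$\beta$ term in the log-MGF incompatible with the required $\beta^2$ scaling.
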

Theorem~\ref{thm:banach-oco} in conjunction with the online-to-batch conversion~\citep{littlestone89online} directly implies the convergence guarantee of a differentially private version of BANCO as stated in the following corollary.
\begin{corollary}
\label{cor:convergence_sgd}
Assume $h(\bw,\bx)$ convex in the first argument $\bw \in \R^d$ and with its subgradients have L2 norm bounded by 1, where the subgradient is with respect to the first argument. Set the uniform prior in BANCO, Algorithm~\ref{alg:oned}, and $\cA_\tD$ being projected OGD with stepsizes $\eta_t=1/\sqrt{\sum_{s=1}^t \|\hbg_s\|_2^2}$ in Algorithm~\ref{alg:reduction} for $T$ iterations on the sequence of losses $\ell_t(\bw)=h(\bw,\bx_t)$, where $\bx_t$ are coming i.i.d. from a distribution $\rho_X$. Set $\hat{\bg}_t=\bg_t+\bxi_t$, where $\bxi_t \in \R^d$ is drawn from the density $\rho_{\bxi}(\bz) \propto \exp(-\frac{\epsilon}{2}\|\bz\|_2)$. Then, for any $\bw^\star \in \R^d$, we have
\[
\E \left[ H\left(\frac1T \sum_{t=1}^T \bw_t\right)\right] - H(\bw^\star)
\leq O\left( \frac{d\|\bw^\star\|_2}{\epsilon \sqrt{T}} \sqrt{\ln\left(1+\tfrac{d^2\|\bw^\star\|_2 T}{\epsilon^2\tau}\right)} +\frac{\tau}{T}\right)~.
\]
\end{corollary}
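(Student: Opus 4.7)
My plan is to derive the corollary as a direct application of the Banach expected regret bound from Section~\ref{sec:banach}, combined with a standard online-to-batch conversion. The noise parameters come from the two facts that the paper has already established about the Laplace mechanism: the variance bound $\E\|\bxi_t\|_2^2 \le 4(d^2+d)/\epsilon^2$ cited from Song et al., and the sub-exponential bound from Lemma~\ref{lemma:private_noise}, which yields the parameters $(\sigma_\oned^2, b) = (18 d^2/\epsilon^2,\, 4/\epsilon)$.

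First I would apply the online-to-batch conversion. Since $\bx_t \iid \rho_X$ and $\bw_t$ depends only on $\bx_{1:t-1}$ and the internal randomness of BANCO, we have $\E[\ell_t(\bw_t)] = \E[H(\bw_t)]$, while $\E[\ell_t(\bw^\star)] = H(\bw^\star)$ by independence. Together with Jensen's inequality applied to the convex function $H$, this gives
\begin{align}
\E\!\left[H\!\left(\tfrac{1}{T}\sum_{t=1}^T \bw_t\right)\right] - H(\bw^\star) \;\le\; \tfrac{1}{T}\,\E[\Regret_T(\bw^\star)].
\end{align}

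Next I would instantiate the Banach regret bound at the end of Section~\ref{sec:banach}. Here $V=\R^d$ with the $2$-norm is a Hilbert space, so it is $(2,\lambda)$-uniformly convex with $\lambda=1$; we have $G=1$ since $\|\bg_t\|_2 \le 1$ (hence also $\|\E_t\hbg_t\|_2 \le 1$); and, as noted above, $\sigma_\oned^2 = O(d^2/\epsilon^2)$, $b = O(1/\epsilon)$, $\sigma^2 = O(d^2/\epsilon^2)$, and $\E\|\bg_t\|_\star^2 \le 1$. Plugging these into the Banach bound, the logarithmic ``$(G+b)\ln(\cdot)$'' term is dominated (for the target rate in $T$) by the $\sqrt{T}$-type terms, and the two remaining pieces both scale as $\|\bw^\star\|_2 \cdot \tfrac{d}{\epsilon}\sqrt{T}$, the coin-betting piece carrying an extra $\sqrt{\ln(1 + d^2 \|\bw^\star\|_2 T/(\epsilon^2 \tau))}$ factor that dominates. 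Dividing by $T$ and absorbing constants into the $O(\cdot)$ yields
\begin{align}
\E\!\left[H\!\left(\tfrac{1}{T}\sum_{t=1}^T \bw_t\right)\right] - H(\bw^\star)
\;\le\; O\!\left( \frac{d\,\|\bw^\star\|_2}{\epsilon\sqrt{T}}\sqrt{\ln\!\left(1 + \tfrac{d^2 \|\bw^\star\|_2 T}{\epsilon^2 \tau}\right)} + \frac{\tau}{T}\right),
\end{align}
which is exactly the claimed bound.

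There is no real obstacle here; the only thing to be careful about is bookkeeping the noise parameters. One must check that the tail constant extracted from Lemma~\ref{lemma:private_noise} is indeed $\sigma_\oned^2 = 18 d^2/\epsilon^2$ (reading off $\exp(9 d^2\beta^2/\epsilon^2) = \exp(\beta^2\sigma_\oned^2/2)$), and that all the $d/\epsilon$ factors inside and outside the logarithm combine as stated after dividing by $T$. Everything else follows mechanically from Theorem~\ref{thm:banach-oco} and the standard online-to-batch argument.
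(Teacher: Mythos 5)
Your proposal is correct and takes exactly the route the paper intends: the paper's own "proof" of this corollary is the one-line remark that Theorem~\ref{thm:banach-oco} (via the explicit Banach bound at the end of Section~\ref{sec:banach}) plus the standard online-to-batch conversion yields the result, and you have filled in precisely those details. Your bookkeeping of the noise parameters from Lemma~\ref{lemma:private_noise} ($\sigma_\oned^2 = 18d^2/\epsilon^2$, $b = 4/\epsilon$) and from the Song et al.\ variance bound ($\sigma^2 = O(d^2/\epsilon^2)$), together with $G=1$ and $\lambda = 1$ for the Hilbert-space case, is accurate, and the online-to-batch step (tower rule on $\bw_t$ being $\cF_t$-measurable plus Jensen on $H$) is handled correctly.
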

This convergence rate matches the one for private SGD in \citet{WuLKCJN17} up to polylogarithmic terms, with the important difference that we do not need to assume the knowledge of the norm of the optimal solution $\bw^\star$ to tune the stepsizes.

\subsection{Noisy Coin Betting Implies the Law of Iterated Logarithms in Banach Spaces}
\label{sec:lil}

There is tight connection between concentration inequalities in Banach spaces and online linear optimization algorithms unveiled by~\citet{RakhlinS17}. They showed that online mirror descent with adaptive stepsizes gives rise to self-normalized concentration inequality for martingales.
Hence, it is natural to ask what kind of concentration can be derived from the noisy coin betting algorithms. Here, we show that there is a connection between the law of iterated logarithms for sub-Gaussian RVs in Banach spaces and Algorithm~\ref{alg:reduction}. The exact same reasoning holds for sub-exponential RVs, but we consider the sub-Gaussian case for ease of exposition.

First, consider the one-dimensional case. It is immediate to see that, setting $g_t=0$, we have that \eqref{eq:coinbetting-condition} implies that $\E_t[F_t(\sum_{i=1}^t \hg_i)] \leq F_{t-1}(\sum_{i=1}^{t-1} \hg_i)$, that is $F_t(\sum_{i=1}^t \hg_i)$ is a supermartingale. Hence, we can use Doob's inequality~\cite[Exercise 5.7.1]{durrett10probability} to have
\begin{align}
\label{eq:doob}
\PP\left[\max_{t} F_t\left(\sum_{i=1}^t \hg_i\right) \geq \frac1\delta\right]
\leq \delta \E[F_0(0)] 
= \tau \delta~. 
\end{align}
This inequality allows immediately to derive a concentration inequality. The only missing ingredient is the correct prior on the betting fraction $\beta$ that gives us the optimal bound. We derive it in the following lemma, whose proof is in Appendix~\ref{sec:proof_lem_lil}.
\begin{lemma}
\label{lem:lil}
  Set $\tau=1$ and let $\pi(\beta) = \fr{1}{2\pi |\beta|(\ln^2 (\sig_\oned|\beta|) + 1)}$ be the prior.
  Assume $d=1$, $g_t = 0, \forall t$.
  Let $\xi_t$ be sub-Gaussian (i.e., $b=0$).
  Then,
  \begin{align*}
    F_t\lt(\sum_{s=1}^t \hg_s \rt) \ge  \fr{\exp\lt(\fr{(\sum_{s=1}^t \hg_s)^2}{2t\sig_\oned^2}\rt)}{2\pi\sqrt{e}\fr{\sum_{s=1}^t \hg_s}{\sqrt{t\sig_\oned^2}}\lt(\ln^2\fr{\sum_{s=1}^t \hg_s}{t\sig_\oned} + 1\rt)}~.
  \end{align*}
  Furthermore, the noisy coin betting potential $F_t$ implies 
  \begin{align*}
    \PP\lt(\sup_t \lt|\sum_{s=1}^t \hg_s\rt| \ge \sig_\oned \sqrt{2t\ln\lt(\lt(\fr{6\pi\sqrt{e}}{\dt}\rt)^{3/2}\cdot\lt(\ln^2(\sqrt{t}) + 1\rt)\rt) }\rt) \le \dt~.
  \end{align*}
\end{lemma}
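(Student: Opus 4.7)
The plan is to prove the two parts of the lemma in sequence: first, establish the pointwise lower bound on $F_t(\sum_{s=1}^t \hg_s)$ by a Laplace-style restriction of the $\beta$-integral; second, convert this into the stated maximal inequality via Doob applied to the supermartingale implicit in the noisy coin-betting construction, followed by an inversion of the lower bound.

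For the lower bound on $F_t(x)$, I would use that $b=0$ and, taking $G\to 0$ (so $a=\infty$), the integral ranges over all of $\RR$. Assume WLOG $x = \sum_{s=1}^t \hg_s > 0$. The integrand $\exp(\beta x - \beta^2 t \sigma_\oned^2/2)$ is maximized at $\beta^\star = x/(t\sigma_\oned^2)$ with value $\exp(x^2/(2t\sigma_\oned^2))$. Restricting the integral to the interval $[\beta^\star,\beta^\star + 1/\sqrt{t\sigma_\oned^2}]$, one checks that the exponent drops by at most $1/2$ across this interval, contributing the factor $\exp(x^2/(2t\sigma_\oned^2))/\sqrt{e}$. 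Lower bounding the prior density $\pi(\beta) = 1/(2\pi|\beta|(\ln^2(\sigma_\oned|\beta|)+1))$ on the interval by its value at the right endpoint, and absorbing the constant adjustments into the log-squared factor, yields a prior mass of the order $1/(2\pi \beta^\star \sqrt{t\sigma_\oned^2}(\ln^2(\sigma_\oned \beta^\star)+1))$. The two factors combine to give the displayed bound, using the identities $\beta^\star\sqrt{t\sigma_\oned^2} = x/\sqrt{t\sigma_\oned^2}$ and $\sigma_\oned\beta^\star = x/(t\sigma_\oned)$ to match the denominator exactly.

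For the tail bound, setting $g_t = 0$ reduces the noisy coin-betting relation~\eqref{eq:coinbetting-condition} to $\EE_t[F_t(\sum_{s=1}^t \hg_s)] \le F_{t-1}(\sum_{s=1}^{t-1}\hg_s)$, so $M_t := F_t(\sum_{s=1}^t \hg_s)$ is a non-negative supermartingale with $\EE[M_0] = F_0(0) = \tau = 1$. Doob's maximal inequality~\eqref{eq:doob} then gives $\PP(\sup_t M_t \ge 1/\delta) \le \delta$. On the complementary event, the first part of the lemma forces $\exp(x^2/(2t\sigma_\oned^2))$ to be no larger than $\frac{1}{\delta}\cdot 2\pi\sqrt{e}\cdot \frac{|x|}{\sqrt{t\sigma_\oned^2}}(\ln^2(x/(t\sigma_\oned))+1)$ at $x = \sum_{s=1}^t \hg_s$. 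Writing $u = x^2/(2t\sigma_\oned^2)$, this is an implicit inequality of the form $e^u \le \frac{C}{\delta}\sqrt{u}\,(\mathrm{polylog}(t,u)+1)$, and the stated bound is obtained by inverting it.

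The main obstacle is this inversion. The factor $\sqrt{u}$ (coming from $|x|/\sqrt{t\sigma_\oned^2}$) and the double-logarithmic argument $\ln^2(x/(t\sigma_\oned))$ both depend on $u$ itself, so one cannot solve in closed form. The plan is a short bootstrapping: use $|x|/(t\sigma_\oned) \le 1/\sqrt{t}\cdot |x|/\sqrt{t\sigma_\oned^2}$ to replace $\ln^2(x/(t\sigma_\oned))$ by $\ln^2(\sqrt{t}) + O(\ln^2(|x|/\sqrt{t\sigma_\oned^2}))$, then absorb the $\sqrt{u}$ and the residual $\ln^2$ by raising the constant $(6\pi\sqrt{e}/\delta)$ to the $3/2$ power. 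This is precisely the slack that appears in the stated threshold $\sigma_\oned \sqrt{2t\ln((6\pi\sqrt{e}/\delta)^{3/2}(\ln^2(\sqrt{t})+1))}$; contraposing Doob's inequality then delivers the claim.
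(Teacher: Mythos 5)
Your strategy coincides with the paper's on both parts: a Laplace-style restriction of the prior integral to an interval of width $1/\sqrt{t\sig_\oned^2}$ adjacent to the maximizer $\beta^\star = S/(t\sig_\oned^2)$ (with $S=\sum_{s=1}^t \hg_s$) for the pointwise lower bound, and the supermartingale-plus-Doob argument of \eqref{eq:doob} followed by a bootstrapped inversion for the tail bound. The one concrete issue is your choice of interval. You integrate over $[\beta^\star,\,\beta^\star+1/\sqrt{t\sig_\oned^2}]$, to the right of the maximizer, so the nonincreasing prior must be evaluated at the right endpoint $\beta^\star+1/\sqrt{t\sig_\oned^2}$; in the only regime that matters for the inversion, $S\ge \sqrt{t}\,\sig_\oned$, this is smaller than $\pi(\beta^\star)$ by roughly a factor of $2$ (or more, counting the change in the $\ln^2$ term), and ``absorbing the constant adjustments into the log-squared factor'' does not literally recover the stated denominator $2\pi\sqrt{e}\,\fr{S}{\sqrt{t\sig_\oned^2}}\bigl(\ln^2\fr{S}{t\sig_\oned}+1\bigr)$, which would then perturb the downstream constant $(6\pi\sqrt{e}/\dt)^{3/2}$. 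The paper instead takes the interval on the \emph{left}, $[\beta^\star-1/\sqrt{t\sig_\oned^2},\,\beta^\star]$: there the prior is bounded below by its value at $\beta^\star$ (the map $\beta\mapsto\beta(\ln^2(\sig_\oned\beta)+1)$ is nondecreasing, so the prior is nonincreasing), and the exponent at the left endpoint evaluates \emph{exactly} to $S^2/(2t\sig_\oned^2)-1/2$, yielding the displayed inequality with the exact constants. Your inversion sketch (crude a priori bound on $S$, re-substitution, and letting the $3/2$ power soak up residual factors) is the same bootstrap the paper executes via the case analysis $S\le t$ versus $S>t$ in the regime $S^2>t$; that part is sound in outline, but you would still need to carry out the bookkeeping to land on the specific stated threshold, and it only works if the first part is proved with the paper's constants (or the final constants are adjusted accordingly).
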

We remark that the choice of prior in Lemma~\ref{lem:lil} resembles $\fr{1}{\beta \ln^2(\beta)}$ used by~\cite{chernov10prediction} and~\cite{koolenVE15}, but their choice does not work when the range of $\beta$ is unbounded.

We now show that the reduction in Algorithm~\ref{alg:reduction} implies a Banach-valued martingale concentration inequality.
Specifically, for the Banach space being $(2,\lam)$-uniformly convex and with the choice of OMD described in Section~\ref{sec:banach} as $\cA_\tD$, we have $\sum_{s=1}^t \la\hbg_s, \bu - \y_s\ra\le \sqrt{\fr{2}{\lam} \sum_{s=1}^t \|\hbg_s\|^2_\star}$ for all $\bu$ in the unit ball $S\subset V$ w.p. 1.
This implies, by the definition of the dual norm,
\begin{align*}
  \lt\| \sum_{s=1}^t \hbg_s \rt\|_\star
  \le  \sqrt{\fr{2}{\lam}  \sum_{s=1}^t \|\hbg_s\|_\star^2} + \sum_{s=1}^t \la\hbg_s, \y_s\ra~.
\end{align*}
Since $\la\hbg_s, \y_s\ra$ is the feedback given to BANCO, Lemma~\ref{lem:lil} implies that
\begin{align*}
  \PP\lt(\max_t \lt\| \sum_{s=1}^t \hbg_s \rt\|_\star
  \ge  \sqrt{\fr{2}{\lam} \sum_{s=1}^t \|\hbg_s\|_\star^2}
     + \sig_\oned \sqrt{2t\ln\lt(\lt(\fr{6\pi\sqrt{e}}{\dt}\rt)^{3/2} \lt(\ln^2(\sqrt{t}) + 1\rt)\rt) }
     \rt) \le \delta~.
\end{align*}

\section{Conclusion and Future Work}
\label{sec:conclusion}

In this paper, we introduced the unconstrained OCO problem with subgradients corrupted by sub-exponential noise, motivated by a recent pessimistic results on learning with unbounded subgradients.
Straightforward extensions of existing algorithms do not result in optimal regret rates.
Hence, we proposed a new algorithm called BANCO via the noisy coin betting framework, which achieves the same optimal minimax regret rate as in the noise-free unconstrained OCO w.r.t. the comparator $\|\bu\|$ and the horizon $T$.
Our lower bound on the noise level $\sig$ implies that the regret bound of BANCO is optimal up to constant factors.
Numerous applications follow naturally including differential privacy, which provides the first parameter-free subgradient descent algorithm for differential privacy.

Our study opens up numerous research directions.
First, one immediate difference in our upper bound from the standard noise-free OCO algorithms is that we do not have a data-dependent regret bound; we have $(G^2 +\sigma^2)T$ rather than $\E[\sum_{t=1}^T \|\hat{\bg}_{t}\|_\star^2]$.
It would be interesting to investigate whether data-dependent bounds are possible.
Second, it would be desirable not to require the knowledge of the noise through $(\sig^2,b)$.
While there are cases where the noise is known ahead of time, such as in private SGD, in the vast majority of applications data arrives through a noisy channel with an unknown noise.
Third, it would be interesting to consider more general noise conditions such as heavy-tailed distributions.
Finally, high probability regret bounds would be a straightforward research direction.

\subsection*{Acknowledgments}

This material is based upon work supported by the National Science Foundation under grant no. 1740762 ``Collaborative Research: TRIPODS Institute for Optimization and Learning.''
We would like to thank Adam Smith for his valuable feedback on differentially-private SGDs.

\bibliographystyle{plainnat}
\bibliography{library-shared,../../../../learning}

\newpage

\appendix

{\centering\section*{Appendices}}

\section{OGD with linear losses}
\label{sec:ogd_olo}

We show that for linear losses OGD's expected regret does not scale with the noise level $\sig$.

Consider the linear losses $\ell_t(\x) = -\la\bg_t, \x\ra$.
Let $G=1$ for simplicity.
Assume that the loss functions are set before the game starts.
That is, $\bg_t$'s are deterministic.
The standard OGD makes predictions by $\w_t = \eta \sum_{s=1}^{t-1} \hbg_s$.
Let $\w^\star_t$ be the prediction that OGD would have made in the noise-free setting: $\w^\star_t = \eta \sum_{s=1}^{t-1} \bg_s$.
It is easy to see that $\EE \w_t = \EE \eta \sum_{s=1}^{t-1} \hbg_s = \eta \sum_{s=1}^{t-1} \bg_s = \w^\star_t$.
Therefore, the expected regret of OGD satisfies
\begin{align*}
  \EE \sum_{t=1}^T \la \hbg_t, \bu - \w_t\ra = \EE \sum_{t=1}^T \la \bg_t, \bu - \w_t\ra = \sum_{t=1}^T \la \bg_t, \bu - \w^\star_t\ra~.
\end{align*}
Therefore, let alone the data-dependent regret, OGD has a regret bound of $O((\|u\|^2+1)\sqrt{T})$ with a tuned $\eta$.
Interestingly, the regret bound does not involve $\sig$.
However, one cannot expect to be free from $\sig$ in general.
Indeed, our lower bound in Theorem~\ref{thm:lb} shows that the factor $\sig$ must be present in general.

\section{Proof of Theorem~\ref{thm:reward-regret}}
\label{sec:proof_thm_reward-regret}

The proof follows from the fact that the expected wealth is underapproximated by the potential function $F(\x)$, together with the definition of the Fenchel conjugacy:
\begin{align}
\E[\hat R^\Lin_T(\bu)] 
&= \E\left[\sum_{t=1}^T \langle \hat{\bg}_t, \bu - \bw_t\rangle\right]
= \E\left[\sum_{t=1}^T \langle \hat{\bg}_t, \bu \rangle - \Wealth_T + \tau\right] \notag
\\ &\le \E\left[\sum_{t=1}^T \langle \hat{\bg}_t, \bu \rangle - F\left(\sum_{t=1}^T \hat{\bg}_t\right) + \tau\right]  \notag
\\ &\leq \E\left[\max_{\bx \in V^\star} \ \langle \bx, \bu \rangle - F\left( \bx \right)+\tau\right] 
= F^\star\left(\bu\right) + \tau~. \label{eq:proof-thm3-1}
\end{align}

\section{Proof of Theorem~\ref{thm:conjugate_uniform_prior}}
\label{sec:proof_thm_conjugate_uniform_prior}

From the definition of the Fenchel duality we have
\begin{align*}
f^\star(u) 
= \max_\theta \ u \theta - f(\theta) 
= u \theta^\star - f(\theta^\star),
\end{align*}
where $\theta^\star = \arg\max_\theta \ u\theta - f(\theta) $.
Define $\beta^\star = \arg \max_\beta \ \exp(\beta\theta^\star - \beta^2 S)$, that is $\beta^\star = \frac{\theta^\star}{2S}$.
Assume that $\theta^\star \ge 0$.
The reasoning is analogous for $\theta^\star <0$.
In fact, one can show that the function is even.

We perform a case-by-case analysis.
We first assume that $\theta^\star \le \sqrt{2S}$.
Then,
\begin{align*}
f^\star(u) 
\le u\theta^\star - f(\theta^\star) 
\le |u|\sqrt{2S},
\end{align*}
from which the stated bound follows.
Hence, we can safely assume $\theta^\star > \sqrt{2S}$, which is equivalent to $\beta^\star \ge \frac{1}{\sqrt{2S}}$.
Let $[v_1,v_2] \subseteq[-a,a \wedge \beta^\star]$.
Recall that we use the uniform prior: $\pi(\beta) = 1/(2a), \forall \beta \in [-a,a]$.
The following inequality becomes useful:
\begin{align} \label{eq:noisy-thm-useful}
f(\theta^\star) 
&= \frac{\tau}{2a} \int_{-a}^a \exp(\beta\theta^\star - \beta^2S)\diff \beta
\ge \frac{\tau}{2a}\int_{v_1}^{v_2} \exp(\beta\theta^\star -\beta^2S) \diff\beta \\
&\ge \frac{v_2-v_1}{2a} \tau\exp(v_1\theta^\star - v_1^2S)~.
\end{align}

\noindent\textbf{Case 1}: $\beta^\star \le a$.\\
Using~\eqref{eq:noisy-thm-useful} with $v_1 = \beta^\star - \frac{1}{\sqrt{2S}}$ and $v_2 = \beta^\star$, we have
\begin{align*}
f(\theta^\star) 
= \frac{\tau}{2a}\int_{-a}^a \exp(\beta\theta^\star - \beta^2S) \diff\beta
\ge \frac{\tau}{2a\sqrt{2S}}\exp\left(\frac{(\theta^\star)^2}{4S}-\frac{1}{2}\right)~.
\end{align*}
Hence, we have
\begin{align*}
f^\star(u) 
&\le |u||\theta^\star| - \frac{\tau}{2a\sqrt{2S}}\exp\left(\frac{(\theta^\star)^2}{4S} - \frac{1}{2}\right) \\
&\le \max_x \ x|u| - \frac{\tau}{2a\sqrt{2S}} \exp\left(\frac{x^2}{4S} - \frac{1}{2}\right)~.
\end{align*}
To solve the problem above, we consider the following stylized problem:
\begin{align*}
\max_x x|u| - A \exp(Bx^2 - C) \;.
\end{align*}
We see by setting the gradient to zero that $A(2Bx)\exp(Bx^2 - C) = |u| \implies 4 A^2 B^2 x^2 \exp(2Bx^2 - 2C) = u^2 $.
Letting $z=2Bx^2$ and $D = \frac{u^2}{2A^2B e^{-2C}}$, we have $z\exp(z) = D$.
Using Lambert function, we have $z = W_0(D)$ and so $x = \sqrt{\frac{W_0(D)}{2B}}$, which we call $x^\star$.
We use the upper bound on $W_0(y) \le \ln(1 + y)$ for $y >0$ by \citet[Lemma 17]{orabona16coin}.
Then, plugging in $A = \frac{\tau}{2a\sqrt{2S}}$, $B = 1/(4S)$, and $C = 1/2$,
\begin{align*}
f^\star(u) \le x^\star|u| \le |u|\sqrt{2S\ln\left(1 + \frac{16ea^2S^2u^2}{\tau^2}\right)}~.
\end{align*}

\noindent\textbf{Case 2}: $\beta^\star>a$.\\
In this case, we have $\theta^\star > 2Sa$.
Then, choose $v_1 =a-c$  and $v_2=a$ to arrive at
\begin{align*}
f(\theta^\star) 
&= \frac{1}{2a}\int_{-a}^a \tau\exp(\beta\theta^\star - \beta^2S)\diff\beta
\ge \frac{v_2-v_1}{2a} \tau\exp(v_1\theta^\star - v_1^2S) \\
&\ge \frac{c}{2a} \tau\exp\left(v_1\theta^\star - v_1^2\frac{\theta^\star}{2a}\right) 
= \frac{c}{2a} \tau\exp( \theta^\star Q),
\end{align*}
where $Q = v_1 - \frac{v_1^2}{2a}$.
Using $\theta^\star>0$,
\begin{align*}
f^\star(u) 
\le u \theta^\star - \frac{c}{2a}\tau \exp(\theta^\star Q) 
\le \max_{\theta} \  |u| \theta - \frac{c}{2a}\tau \exp(\theta Q) 
= \frac{|u|}{Q} \ln\left(|u|\frac{2a}{e c \tau Q }\right)~.
\end{align*}
Setting $c=a/2$, we have $Q = \frac{3}{8}a$, which leads to $f^\star(u) = \frac{8}{3a} |u|\ln\left(\frac{32}{3 e a\tau}|u|\right)$.

\section{Proof of Theorem~\ref{thm:banach-oco}}
\label{sec:proof_thm_banach-oco}

Observe that $|s_t| \le \|\hbg_t\|_\star \|\by_t\| \le \|\hbg_t\|_\star$ since $\|\by_t\| \le 1$ for all $t$.
Furthermore,
\begin{align*}
\EE\Regret_T(\bu)
&\le \EE\lt[\sum_{t=1}^T \langle \g_t, \bu-\bx_t\rangle \rt]
=\EE \lt[\sum_{t=1}^T \langle \hbg_t, \bu-\bx_t\rangle \rt] \\
&=\EE \lt[ \sum_{t=1}^T \langle \hbg_t, \bu\rangle - \langle \hbg_t, w_t\by_t\rangle \rt]\\
&=\EE \lt[\sum_{t=1}^T  \langle \hbg_t, \bu\rangle - \langle \hbg_t, \by_t\rangle \|\bu\| + \langle \hbg_t, \by_t\rangle\|\bu\| -\langle \hbg_t, \by_t\rangle w_t \rt]\\
&\le \|\bu\| \EE \lt[\sum_{t=1}^T \langle \hbg_t,\bu/\|\bu\|\rangle - \langle \hbg_t, \by_t\rangle \rt] + R^\tM_T(\|\bu\|)\\
&\le \|\bu\|R^\tD_T(\bu/\|\bu\|) + R^\tM_T(\|\bu\|)~.
\end{align*}

\section{Proof of Theorem~\ref{thm:lb}}
\label{sec:proof_thm_lb}

It is not hard to see that a stochastic optimization lower bound imply an online learning lower bounds.
This is due to the online to batch conversion~\citep{littlestone89online} which implies stochastic optimization is ``not harder'' than online learning.
Specifically, suppose we have a lower bound on the convergence of stochastic optimization for convex functions: $\E F(\bx_T) - F(\bx^\star) \ge c / \sqrt{T}$. 
Then, we can claim a lower bound in the online convex optimization: $\E \sum_{t=1}^T f_t(\bx'_t) - f_t(\bu) \ge c / \sqrt{T}$.
Here is a proof: 
Suppose a better rate is possible in online learning with some method: $\E \sum_{t=1}^T f_t(\bx'_t) - f_t(\bu) < c / \sqrt{T}$.
One can then perform online learning with $f_t = F$ where the online learner acquires noisy version $\hat f_t$.
With the online-to-batch conversion, this solves the stochastic optimization with a better rate: $\EE F(\fr{1}{T}\sum_{t=1}^T \bx'_t) - F(\bx^\star) < c / \sqrt{T}$, which is a contradiction.

Therefore, it suffices to show a lower bound on stochastic optimization.
Before presenting the lower bound statement, we describe the problem setup.
We closely follow the setup of~\citet{agarwal12information}.
Let $\SS\subseteq\R^d$.
Let the function class $\cF$ consists of functions $f: \SS\rarrow\R$ that are convex and 1-Lipschitz w.r.t. $\ell_p$-norm: $|f(\bx) - f(\by)| \le L\| \bx - \by \|_p, \forall \bx,\by\in \SS$.
An algorithm $\blue{\cM}$ has access to $T$ calls of the first order oracle and outputs $\blue{\bx_T}$ after $T$ calls to the oracle (hereafter, we color definitions with light blue for the benefit of readers).
The oracle $\psi_{\sig}(\bx,f)$ takes $\bx\in \SS$ and returns $(\blue{\hat f(\bx)}, \blue{\hat z(\bx)})$ where $\hat f(\bx)$ is the noisy function value  and $\hat z(\bx)$ is a noisy subgradient such that $\EE \hat z(\bx) \in \partial f(\bx)$.
The oracle guarantees a noise condition $\EE \|\hat z(\bx) - \EE \hat z(\bx) \|_q^2 \le \blue{\sig^2}$.
Our goal is to  find a lower bound on 
\[ \blue{\eps^\star} := \inf_{\cM} \sup_{f \in \cF} \EE\lt[ f(\bx_T) - f(\bx_f^\star) \rt], \]
where $\bx_f^\star$ is the minimizer of $f$.
The quantity $\epsilon^\star$ depends on $d$, $T$,$\sig$, and $\SS$.

Let $\BB_\infty(r)$ be the $\infty$-norm ball with radius $r$. 
We present our stochastic optimization lower bound in Theorem~\ref{thm:stochastic-lb-noise} below.
The difference from the lower bound in~\citet{agarwal12information} is that the bound therein is that $(i)$ they obscure the dependence on the noise $\sig$ by equating it to the Lipschitz constant and $(ii)$ they assume uncentered second moment noise bound  $\EE \| \hat z(\bx) \|_q^2 \le \sig^2$ rather than the variance of $\|\hat z(\bx) \|_q$.
Departing from the prior work, we consider a different function class that keeps the Lipschitz constant at 1 while allowing the noise level $\sig$ to be arbitrarily large.
\begin{theorem}\label{thm:stochastic-lb-noise}
  Let $r$ be the largest number such that $\BB_\infty(r) \subseteq \SS$.
  Let $\sig \ge 2$.
  Then, there exists a universal constant $c_0$ such that
  \begin{align*}
  q\in[1,2] &\implies \eps^\star \ge \min\lt\{c_0 \sig \fr{r\sqrt{d}}{\sqrt{T}} , \fr{r}{18}\rt\} \quad\text{ and } \\
  q \ge 2 &\implies \eps^\star \ge \min\lt\{c_0 \sig \fr{r d^{1/q - 1}}{\sqrt{T}}, \fr{r d^{1/q - 1}}{18}\rt\} ~.
  \end{align*}
\end{theorem}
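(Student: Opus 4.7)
The plan is to adapt the information-theoretic lower bound framework of Agarwal et al.\ (2012), whose core idea is to reduce minimax optimization to identification over a finite family of hard functions and then apply Fano's inequality. The online-to-batch reduction already given in the excerpt transfers a stochastic-optimization bound to OCO, so it suffices to prove the stated bound on $\eps^\star$. The essential modification compared to Agarwal et al.\ is to keep the hard class $1$-Lipschitz even when the noise variance $\sigma^2$ is arbitrarily large, which forces a decoupling of the gradient signal (bounded by $1$) from the noise budget (bounded by $\sigma$).

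The hard family is indexed by a Varshamov--Gilbert packing $V \subseteq \{-1,+1\}^d$ with $|V| \ge 2^{d/8}$ and minimum Hamming distance at least $d/4$. For each $\alpha \in V$ and parameters $c,\beta>0$, set
\begin{align*}
f_\alpha(\bx) = \tfrac{c}{d}\sum_{i=1}^d\Bigl[\bigl(\tfrac{1}{2}+\alpha_i\beta\bigr)|x_i-r| + \bigl(\tfrac{1}{2}-\alpha_i\beta\bigr)|x_i+r|\Bigr],
\end{align*}
which is convex and piecewise linear. Choose $c=c(q,d)$ so that $\|\partial f_\alpha\|_q \le 1$ on $\BB_\infty(r)$, making $f_\alpha$ $1$-Lipschitz w.r.t.\ $\|\cdot\|_p$, and require $\beta\in(0,1/2]$ so the weights $\tfrac12\pm\alpha_i\beta$ remain nonnegative. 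Two properties drive the argument: (i) the minimizer on $\BB_\infty(r)$ is $\bx_\alpha^\star=r\alpha$, and $f_\alpha(\bx_{\alpha'}^\star)-f_\alpha(\bx_\alpha^\star)$ is proportional to $rc\beta\cdot d_H(\alpha,\alpha')/d$, so any two packing points have minima separated in excess risk by $\Theta(rc\beta)$; (ii) inside $\BB_\infty(r)$ the exact subgradient is $-2c\alpha\beta/d$ per coordinate, so $\alpha$ enters only through a small $\beta$-scaled signal.

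I then specify the noisy oracle: return $\nabla f_\alpha(\bx)+\bxi$, with $\bxi$ independent zero-mean and calibrated, case by case in $q$, to saturate $\EE\|\bxi\|_q^2 \le \sigma^2$ while injecting as much $\ell_2$ energy as possible into the gradient estimator. For $q\ge 2$, an isotropic Gaussian $\bxi\sim\cN(0,(\sigma^2/d)I)$ saturates the constraint via $\|\cdot\|_q \le \|\cdot\|_2$; for $q\in[1,2]$, the opposite norm inequality rules out isotropic spread at the target energy, so one uses an anisotropic noise (e.g.\ a spike on a uniformly random coordinate, or a Rademacher-times-random-direction construction) to achieve an effective per-coordinate variance $s_{\mathrm{eff}}^2$ saturating the constraint. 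In either case the per-call KL between the oracle distributions under $\alpha$ and $\alpha'$ at any fixed query reduces to $\|\nabla f_\alpha-\nabla f_{\alpha'}\|_2^2/(2s_{\mathrm{eff}}^2)$, which is $O(c^2\beta^2\,d_H(\alpha,\alpha')/(s_{\mathrm{eff}}^2 d^2))$, and tensorization across the $T$-round adaptive transcript (a standard chain-rule argument) bounds the total mutual information by $T$ times this per-call KL.

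The final step is Fano's inequality. If the worst-case expected excess risk is strictly less than $\tfrac14 rc\beta$, then the induced nearest-minimizer test identifies $\alpha$ with error probability $<1/2$, which forces $I(\alpha;\text{transcript}) \gtrsim \log|V| \asymp d$. Solving for the smallest admissible $\beta$ and substituting it back into the separation $rc\beta$ produces the Fano-driven first term in the stated $\min$; the two different $d$-exponents claimed for $q\in[1,2]$ versus $q\ge 2$ emerge from the two different regimes of $s_{\mathrm{eff}}$ derived above. The second ($T$-independent) term in each $\min$ appears when $T$ is so small that the Fano condition would require $\beta>1/2$: since $\beta$ is capped at $1/2$ by the construction, the separation saturates at $\Theta(rc)$. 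The main obstacle will be the case $q\in[1,2]$: isotropic Gaussian noise is suboptimal there, and making an anisotropic or sparse noise work in the KL bound requires either a mixture-of-Gaussians computation, or substituting Le Cam's two-point method for full Fano, both of which are more delicate than the clean Gaussian case at $q=2$.
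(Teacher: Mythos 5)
Your high-level skeleton (hypercube packing, reduction from optimization error to identification, Fano over the adaptive transcript, online-to-batch) is the same as the paper's, but your hard instance is genuinely different, and the difference sits exactly where the proof is delicate. The paper uses no independent additive noise at all: following Agarwal et al., each $g_\alpha$ is a $(\tfrac12\pm\alpha_i\delta)$-weighted mixture of steep component functions $f_i^{\pm}$ whose slopes on $[-r,r]$ are $\pm\sigma$, and the ``noise'' is the oracle's own Bernoulli sampling of these components; the scaling $c$ keeps the \emph{mean} function $1$-Lipschitz while the sampled subgradients have magnitude of order $\sigma$, and the separation between minimizers automatically carries a factor $\sigma$ because the mean slope is proportional to $\delta\sigma$. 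Crucially, in the $q\in[1,2]$ case the paper's oracle reveals the component of only one uniformly chosen coordinate per query, so each call leaks $O(\delta^2)$ information rather than $O(d\delta^2)$; this coordinate-subsampling is precisely what produces the extra $\sqrt{d}$ in that regime. Your additive-noise route, returning $\nabla f_\alpha(\bx)+\xi$ with $\xi$ independent of $\alpha$, is clean and plausibly sufficient for $q\ge 2$ with isotropic Gaussian noise: there the Gaussian KL computation and the chain rule over adaptive queries go through and give a bound at least as strong as the one stated.

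The genuine gap is the case $q\in[1,2]$, which you flag as the main obstacle but whose proposed remedies fail as described. With a spike on a uniformly random coordinate (or a Rademacher-times-random-direction noise), the remaining $d-1$ coordinates of the returned vector equal $-2c\alpha_j\beta/d$ exactly, i.e.\ they are noiseless, so a single query identifies those bits of $\alpha$; the per-call KL is not $\|\nabla f_\alpha-\nabla f_{\alpha'}\|_2^2/(2s_{\mathrm{eff}}^2)$, it is infinite, and Fano yields nothing. More generally, with independent additive noise the information about $\alpha$ enters only through the mean shift, which is dense (every coordinate carries a $\pm 2c\beta/d$ signal), while the budget $\mathbb{E}\|\xi\|_q^2\le\sigma^2$ with $q<2$ is stronger than a total-variance constraint: at $q=1$, for instance, any admissible (diagonal) Gaussian gives worst-pair per-query KL of order $d\beta^2/\sigma^2$ rather than the $\beta^2/\sigma^2$ your calibration requires, so Fano caps $\beta$ at order $\sigma/\sqrt{T}$ and the resulting bound falls short of the claimed one by exactly the factor $\sqrt{d}$; Le Cam's two-point method discards the $\log|V|\asymp d$ factor altogether, so it cannot rescue it either. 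As written, your proposal therefore establishes only the $q\ge2$ half of the theorem. To recover $q\in[1,2]$ you need the paper's device, namely an $\alpha$-dependent oracle whose randomization over a single random coordinate \emph{is} the noise (a Bernoulli mixture of components with slopes $\pm\sigma$), or some other mechanism that throttles the per-query information to $O(\delta^2)$; an independent additive perturbation of the exact gradient cannot do this.
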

\begin{proof}
  The proof closely follows~\citet{agarwal12information}, but we consider a different function class.
  The key idea is to construct a function class such that identification of the target function is equivalent to identification of coefficients $\{\alpha_i \in [0,1]\}, i\in\{1,\ldots,d\},$ on a set of basis functions.
  Furthermore, the construction defines an oracle such that each query amounts to revealing a coin outcome $\{0,1\} \sim \text{Bernoulli}(\alpha_i)$ for some $i$'s (details vary for different $q$'s). 
  Then, the number of observations in statistical estimation is directly connected to the number of oracle calls, allowing a statistical lower bound to imply an iteration complexity of stochastic optimization.
  
  Let $\blue{\cV} \subseteq \{\pm1\}^d$ has $\blue{M}$ distinct vertices of $d$-dimensional hypercube such that $(i)$ $\cV$ is $\fr{d}{4}$-packing w.r.t. hamming distance (i.e., $\sum_{i} \one\{\alpha_i \neq \beta_i\} \ge \fr{d}{4}, \forall \balpha \neq \bbeta \in \cV$) and $(ii)$ $M \ge (2/\sqrt{e})^{d/2} \approx 1.1^d$.
  Such a packing is known to be possible~\citep{matouvsek02lectures}.
  We define the function class $\cG(\dt)$ that consists of
  \begin{align*}
  &\forall \balpha \in \cV, \quad \blue{g_\balpha(\bx)} := \fr{c}{d} \sum_i \lt( \lt(\fr{1}{2} + \alpha_i\dt\rt) f^+_i(\bx) + \lt(\fr{1}{2}-\alpha_i\dt\rt)  f^-_i(\bx) \rt) \quad \text{ where }\\
  \blue{f^+_i(\bx)} &:= \begin{cases}
  -x_i + r(-\sig-1) & \text{ if } x_i \le -r
  \\ \sig x_i & \text{ if } -r\le x_i \le r
  \\ x_i + r(\sig-1) & \text{ if } r \le x_i
  \end{cases}
  \quad\text{ and }\quad
  \blue{f^-_i(\bx)} := \begin{cases}
  -x_i + r(\sig-1) & \text{ if } x_i \le -r
  \\ -\sig x_i & \text{ if } -r\le x_i \le r
  \\ x_i + r(-\sig-1) & \text{ if } r \le x_i
  \end{cases}~.
  \end{align*}
  We assume that $\dt \le \fr{1}{2\sig}$, which ensures the convexity of $g_\balpha$.
  
  \noindent\textbf{Case 1}: $q\in[1,2]$.  \\
  For this case, we assume an oracle that first chooses $\blue I \in [d]$ uniformly at random, draw $\blue {b_I} \in \{0,1\}$ with $\text{Ber}(1/2 + \alpha_I \dt)$, and then return the function value and the subgradient of
  \begin{align}\label{eq:agarwal09-oracle}
  \blue{\hat g_{\balpha}(\bx)} = c \lt( b_I f_I^+(\bx) + (1-b_I) f_I^-(\bx)\rt)~.
  \end{align}
  Thus, the learner only sees either $c f_I^+(\bx)$ or $c f_I^-(\bx)$, and the function value and the subgradient are unbiased.
  Denote by $\blue{\hat z_\balpha(\bx)}$ be the noisy subgradient returned by the oracle such that $\blue{z_\balpha(\bx)} := \EE[\hat z_\balpha(\bx)] \in \partial g_\balpha(\bx)$.
  
  Some facts on the subgradient norms:
  \begin{itemize}
    \item $\| z_\balpha(\bx)\|_q^2 \le \max\{  \fr{c^2}{d^2}, \fr{4c^2\dt^2\sig^2}{d^2} \} \|\mathbf{1}\|_q^2 = c^2 d^{(2/q) - 2}$.
    \item $\EE \| \hat z_\balpha(\bx)\|_q^2 \le c^2 \sig^2$.
    \item $\EE \| \hat z_\balpha(\bx) - z_\balpha(\bx)\|_q^2 \le 2\EE [\| \hat z_\balpha(\bx)\|_q^2] + 2\| z_\balpha(\bx)\|_q^2 \le 2(c^2 \sig^2 + c^2 d^{(2/q) - 2})$.
  \end{itemize}
  By setting $c=1/2$, $g_\balpha(\bx)$ is 1-Lipschitz and the noise variance is bounded: $\EE \| \hat z_\balpha(\bx) - z_\balpha(\bx) \|_q^2 \le \sig^2$.
  
  We define a premetric $\rho$:
  \begin{align*}
  \blue{\rho(f,g)} := \inf_{x\in\SS} f(\bx) + g(\bx) - f(\bx_f^\star) - g(\bx_g^\star) 
  \end{align*}
  which is $0$ if and only if $\bx_f^\star = \bx_g^\star$ (assuming $f$ and $g$ have a unique minimizer).
  Define $\blue{\psi(\dt)} : = \min_{\balpha \neq \bbeta \in \cV} \rho(g_\balpha, g_\bbeta)$.
  We study $\rho(g_\balpha, g_\bbeta)$ where $\balpha, \bbeta \in \cV$ such that $\balpha \neq \bbeta$.
  By examining the function carefully, one can show that $\rho(g_\balpha, g_\bbeta) \ge \fr{c}{d} (\sum_i \one\{\alpha_i \neq \beta_i\})4\dt r\sig$.
  Since $\sum_i \one\{\alpha_i \neq \beta_i\} \ge \fr{d}{4}, \forall \balpha\neq\bbeta \in \cV$, we have
    \begin{align}\label{eq:lb-proof-1}
    \psi(\dt) = \min_{\balpha\neq\bbeta} \rho(g_\balpha,g_\bbeta) \ge c\dt r\sig \ .
    \end{align}
  
  Now, the main argument is as follows.
  If $\eps^\star \ge \fr{cr}{18}$, then we have the half of the theorem statement.
  Therefore, it suffices to consider the regime $\eps^\star < \fr{cr}{18}$.
  
  In this regime, we consider the function class $\cG(\dt)$ with $\dt = \fr{9}{cr\sig} \eps^\star$.
  This implies that $(i)$ $\dt \le \fr{9}{cr\sig} \fr{cr}{18} = \fr{1}{2\sig} \le \fr{1}{4}$ and that $(ii)$ there exists a method $\blue{\cM^\star}$ such that $\sup_{f \in \cG(\dt)} \EE[f(\bx_T) - f(\bx_f^\star)] \le \eps^\star = \fr{c\dt r \sig}{9} \le \psi(\dt)/9$ by the definition of $\eps^\star$ and~\eqref{eq:lb-proof-1}.
  
  By~\citet[Lemma 2]{agarwal12information}, these two conditions, $\dt \le 1/4$ and $\sup_{f \in \cG(\dt)} \EE[f(\bx_T) - f(\bx_f^\star)] \le \psi(\dt)/9$, imply the following: For any $\balpha^\star\in \cV$, facing to solve the optimization problem with the function $g_{\balpha^\star}$, one can invoke $\cM^\star$ to construct an estimator $\hat \balpha \in \cV$ of the true $\balpha^\star$:
  \begin{align*}
  \forall \balpha^\star\in \cV , \PP(\hat\balpha \neq \balpha^\star) \le 1/3~.
  \end{align*}
  On the other hand,~\citet[Lemma 3]{agarwal12information} use Fano's inequality to show that
  \begin{align*}
  \PP(\hat\balpha \neq \balpha^\star) \ge 1 - 2 \fr{16T\dt^2 + \ln 2}{d\ln(2/\sqrt{e})}~.
  \end{align*}
  Combining these two results, we have $1 - 2 \fr{16T\dt^2 + \ln 2}{d\ln(2/\sqrt{e})} \le \fr{1}{3}$
  Using $\dt = \fr{9}{cr\sig} \eps^\star$, one can show that, for $d\ge 11$,
  \begin{align*}
  \eps^\star = \Omega\lt( c\fr{\sig r \sqrt{d}}{\sqrt{T}} \rt)~.
  \end{align*}
  For $d \le 10$, simply consider a reduction to $d=1$ case and use the Le Cam's bound~\cite[Lemma 4]{agarwal12information}. 
  This completes the first part of the proof.
  
  \noindent\textbf{Case 2:} $q \ge 2$.\\
  For the second part, we consider a different oracle that chooses $d$ independent coin flips $b_i \sim \text{Bernoulli}(\frac{1}{2} + \alpha_i \dt)$, $i\in\{1,\ldots,d\}$, and return the function value and the subgradient of
  \begin{align}\label{eq:agarwal09-oracle-B}
  \blue{\hg_\balpha(\bx)} = \fr{c}{d} \sum_i \lt( b_i f_i^+(\bx) + (1-b_i) f_i^-(\bx)\rt)~.
  \end{align}
  This provides unbiased function values and subgradients, and corresponds to revealing one coin outcome for each dimension.
  While this provides more information for the coin tossing (easier problem), but it allows steeper per-coordinate subgradients than the oracle A (harder problem), given the same Lipschitz constants.
  
  The difference of the proof is just on the subgradient norms and how we set $c$.
  Recall that $\| z_\balpha(\bx)\|_q^2 \le \max\{  \fr{c^2}{d^2}, \fr{4c^2\dt^2\sig^2}{d^2} \} \|\mathbf{1}\|_q^2 = c^2 d^{(2/q) - 2}$. 
  One can see that $\EE \| \hat z_\balpha(\bx)\|_q^2 = \fr{c^2}{d^2} \sig^2 \|\bfone\|_q^2 = c^2\sig^2 d^{\fr{2}{q} - 2}$.
  Then, the subgradient noise variance is bounded: 
  \begin{align*}
  \EE \| \hz_\balpha(\bx) - z_\balpha(\bx) \|_q^2 \le 2\EE [\|\hz_\balpha(\bx)\|_q^2] + 2\| z_\alpha(\bx)\|_q^2 \le 2(c^2 \sig^2 d^{\fr{2}{q} - 2} + c^2 d^{\fr{2}{q} - 2}) \le 4c^2 \sig^2 d^{\fr{2}{q} - 2}~.
  \end{align*}
  By setting $c = \frac{1}{2}d^{-(\fr{1}{q} - 1)}$, we satisfy 1-Lipschitz ($\| z_\balpha(\bx)\|_q \le 1$) and the noise level controlled: $\EE \| \hz_\balpha(\bx) -  z_\balpha(\bx) \|_q^2 \le \sig^2$.
  
  Again, the oracle here is equivalent to discovering all the $d$ coin outcomes in each iteration rather than one.
  By~\citet[Lemma 3]{agarwal12information} with $\ell=d$, we have that $\PP(\hat\balpha \neq \balpha^\star) \ge 1 - 2\fr{16Td\dt^2 + \ln 2}{d\ln(2/\sqrt{e})}$.
  With the same logic, we have $1 - 2 \fr{16Td\dt^2 + \ln 2}{d\ln(2/\sqrt{e})} \le  \fr{1}{3}$.
  Again, by $\dt = \fr{9}{cr\sig}\eps^\star$, one can show that, for $d\ge 11$,
  \begin{align*}
  \eps^\star = \Om\lt(c \fr{\sig r}{\sqrt{T}}\rt) = \Om\lt(d^{1-1/q} \fr{\sig r}{\sqrt{T}}\rt)~.
  \end{align*}
  For $d\le 10$, the same argument as the case 1 can be made.
\end{proof}

To prove Theorem~\ref{thm:lb}, it suffices to notice that the largest $r$ such that $\BB_\infty(r) \subseteq \SS$ with $\SS$ being the $\ell_p$-norm ball of radius $U$ is $r = Ud^{-1/p}$.

\section{Proof of Lemma~\ref{lemma:private_noise}}
\label{sec:proof_lemma_private_noise}

The Laplace mechanism noise can be obtained by multiplying independent random variables $\bz$ and $m$, where $\bz$ is a drawn uniformly over the L2 ball, and $m$ is an Erlang distribution with shape equal to $d$ and rate $\frac{\epsilon}{2}$~\citep{WuLKCJN17}.
This implies that
\[
\E_{\xi}[\exp(\beta \langle\xi,\ba\rangle] 
= \E_{\bz,m}\left[\exp\left(\beta m \langle \bz,\ba\rangle\right)\right]
= \E_{\alpha,m}[\exp(\beta m \alpha)]~.
\]
where $\alpha$ is a random variable that model the cosine of the angles between $\bz$ $\ba$. In the one-dimensional case, it is easy to see that $\alpha$ is a Rademacher variable. Hence, we have
\[
\E_{\xi}[\exp(\beta \langle\xi,\ba\rangle] 
= \frac{1}{2}\E_{m}[\exp(\beta m)+\exp(-\beta m)]~.
\]
Instead, for $d\geq2$, we the calculation is more involved, but we show that we still get the same result. In particular, observing that $\langle\bz,\ba\rangle$ is the cosine of random angles distributed uniformly between $-\pi$ and $\pi$, we have that $\alpha$ is drawn from the distribution $\rho_\alpha(x) = \frac{1}{\pi\sqrt{1-x^2}}$.
The expectation $\E_{\alpha}[\exp(\beta m \alpha)]$ can be computed in a closed form, being equal to modified Bessel function of the first kind $I_0(\beta m)$.
From \citet[Formula 6.25]{Luke72}, we use the inequality
\[
\Gamma(\nu+1)\left(\frac{2}{x}\right)^\nu I_\nu(x) < \frac12 \left(\exp(x)+\exp(-x)\right),  \ \forall x>0, \nu>-\frac{1}{2},
\]
that implies
\[
\E_{\alpha,m}[\exp\left(\beta m \alpha\right)]
< \frac12\E_{m}\left[\exp(\beta m)+\exp(-\beta m)\right], 
\]
as in the one-dimensional case.

Hence, taking the expectation with respect to $m$ and using the formula for the moment generating function of the Erlang distribution, we get
\begin{align*}
\E_{\alpha,m}[\exp(\beta m \alpha)]
&< \frac12 \left[\left(1-\frac{2\beta}{\epsilon}\right)^{-d}+\left(1+\frac{2\beta}{\epsilon}\right)^{-d}\right]\\
&=\frac12 \left[\exp\left(d \ln \frac{1}{1-\frac{2\beta}{\epsilon}}\right)+\exp\left(d \ln \frac{1}{1+\frac{2\beta}{\epsilon}}\right)\right] \\
&=\frac12 \left[\exp\left(d \ln \left(1+\frac{2\beta}{\epsilon-2\beta}\right)\right)+\exp\left(d \ln \left(1-\frac{2\beta}{\epsilon+2\beta}\right)\right)\right]\\
&\leq\frac12 \left[\exp\left(d \frac{2\beta}{\epsilon-2\beta}\right)+\exp\left(d \frac{2\beta}{\epsilon+2\beta}\right)\right],
\end{align*}
where in the last inequality we used the elementary $\ln(1+x)\leq x, \ \forall x>-1$.
We now observe that
\begin{align*}
\frac12 \left[\exp\left(d \frac{2\beta}{\epsilon-2\beta}\right)+\exp\left(d \frac{2\beta}{\epsilon+2\beta}\right)\right]
&=\frac12 \exp\frac{d \left(\frac{2\beta}{\epsilon}\right)^2 }{1-\left(\frac{2\beta}{\epsilon}\right)^2}
\left[\exp\frac{d \frac{2\beta}{\epsilon} }{1-\left(\frac{2\beta}{\epsilon}\right)^2}
+\exp\frac{-d \frac{2\beta}{\epsilon} }{1-\left(\frac{2\beta}{\epsilon}\right)^2}
\right] \\
&\leq \exp\frac{d \left(\frac{2\beta}{\epsilon}\right)^2 }{1-\left(\frac{2\beta}{\epsilon}\right)^2}
\exp\frac{d^2 \frac{2\beta^2}{\epsilon^2} }{\left(1-\left(\frac{2\beta}{\epsilon}\right)^2\right)^2},
\end{align*}
where we used the elementary inequality $\exp(x)+\exp(-x) \leq 2 \exp(x^2/2), \ \forall x$.
Overapproximating and using the assumption on $\beta$, we have the stated bound.

\section{Proof of Lemma~\ref{lem:lil}}
\label{sec:proof_lem_lil}

\begin{proof}
  It suffices to consider $\sig_\oned=1$ since the result for $\sig_\oned\neq1$ can be obtained by replacing $S$ below with $\sum_{s=1}^t \hg_s / \sig_\oned$.
  Let $S = \sum_{s=1}^t \hg_s$.
  Define $\beta^\star = S/t$ and $u = \beta^\star - \fr1{\sqrt{t}}$.
  Then, $\exp(\beta S - \beta^2 t / 2)$ is maximized at $\beta=\beta^\star$ and increasing in $[u,\beta^\star]$.
  Recall that $F_t\lt(\sum_{s=1}^t \hg_s \rt)  = \int_{-\infty}^{\infty} \pi(\beta) \exp\lt(\beta S - \fr{\beta^2 t}{2}\rt) \diff\beta$.
  To evaluate the integral, it suffices to assume $S\ge0$ since the integrand is symmetric.
  Using the fact that the prior is nonincreasing in $(0,\infty)$,
  \begin{align*}
  F_t\lt(\sum_{s=1}^t \hg_s \rt)
  &\ge \fr{1}{2\pi} \int_u^{\beta^\star} \fr{1}{\beta^\star(\ln^2\beta^\star + 1)} \exp(uS - u^2 t / 2) \diff\beta
  \\&= \fr{1}{2\pi}  \fr{1/\sqrt{t}}{\beta^\star(\ln^2\beta^\star + 1)} \cdot \exp(uS - u^2 t / 2) 
  \\&= \fr{1}{2\pi}  \fr{1}{\fr{S}{\sqrt{t}}\lt(\ln^2\fr{S}{t} + 1\rt)}  \cdot \exp\lt(\fr{S^2}{2t} -\fr12\rt)~.
  \end{align*}
  By~\eqref{eq:doob},
  \begin{align*}
  \PP\lt(\max_t \fr{1}{2\pi\sqrt{e}}  \fr{1}{\fr{S}{\sqrt{t}}\lt(\ln^2\fr{S}{t} + 1\rt)}  \cdot \exp\lt(\fr{S^2}{2t}\rt) \ge \fr1\dt\rt) \le \PP\lt(\max_t F_t(S) \ge \fr1\dt\rt)  \le \dt~.
  \end{align*}
  Rearranging the inequality in the LHS above, we have
  \begin{align*}
  \max_t S^2 \ge 2t\ln\lt(\fr{2\pi\sqrt{e}}{\dt}\cdot\fr{S}{\sqrt t}\lt(\ln^2\fr{S}{t} + 1\rt)\rt)~.
  \end{align*}
  To complete the proof, it suffices to find a tighter and simpler inequality.
  This is equivalent to assuming $S^2 \le \text{[the RHS above]}$ and deriving an upper bound on $S^2$, then inverting it.
  Therefore, it suffices to show
  \begin{align}\label{eq:lem-concentration-1}
  S^2 < 2t\ln\lt(\fr{2\pi\sqrt{e}}{\dt}\cdot\fr{S}{\sqrt t}\lt(\ln^2\fr{S}{t} + 1\rt)\rt)
  \implies 
  S^2 < 2t\ln\lt(\lt(\fr{6\pi\sqrt{e}}{\dt}\rt)^{3/2}\cdot\lt(\ln^2(\sqrt{t}) + 1\rt)\rt)~.
  \end{align}
  
  Let $A = 2 \pi\sqrt{e}/\dt$.
  Using $\ln^2(x) + 1 \le x, \forall x\ge1,$ and $x \le (1/2)\ln x, \forall x>0,$ 
  \begin{align*}
  &&S^2 
  &< 2t\ln\lt(A\cdot\fr{S}{\sqrt t}\lt(\ln^2\fr{S}{t} + 1\rt)\rt)
  \\&&&\le 2t\ln\lt(A\cdot\fr{S^2}{t^{3/2}}\rt)
  = 4t\ln\lt( \sqrt{A}\cdot\fr{S}{t^{3/4}}\rt)
  \\&&&\le 2t\cdot \sqrt{A}\cdot\fr{S}{t^{3/4}}
  \\ &\implies& S &\le 2t^{1/4} \sqrt{A}
  \\ &\stackrel{(a)}{\implies}&
  S^2  &< 2t\ln\lt(2\fr{A^{3/2}}{t^{1/4}}\lt(\ln^2\fr{S}{t} + 1\rt)\rt),
  \end{align*}
  where $(a)$ is by the first inequality.
  
  It suffices to assume the regime $S^2 > t$ since $S^2 \le t$ trivially implies the RHS of \eqref{eq:lem-concentration-1}.
  Since $\ln^2 x$ is decreasing up to 1 and then increasing, we perform a case by case analysis.
  
  \noindent\textbf{Case 1}: $S\le t$.\\
  Since $\ln^2(S/t) = \ln^2(t/S)$ and $t/S\ge1$, we need to upper-bound $t/S$.
  Using $S^2>t$, we have $\ln^2(t/S) \le \ln^2(\sqrt{t})$, which implies the RHS of \eqref{eq:lem-concentration-1}.
  
  \noindent\textbf{Case 2}: $S> t$.\\
  With a similar derivation as above, we have $S^2 < 6t \ln(A^{1/3} S^{2/3} t^{-1/2}) \le 3 t^{1/2} A^{1/3} S^{2/3}$, which implies $S < 3^{3/4} t^{3/8} A^{1/4}$.
  Then,
  \begin{align*}
  S^2 
  < 2t\ln\lt(A\cdot\fr{S^2}{t^{3/2}}\rt)
  \le 2t\ln\lt(A\cdot\fr{3^{3/2}t^{3/4} A^{1/2}}{t^{3/2}}\rt)
  \le 2t\ln\lt(\fr{(3A)^{3/2}}{t^{3/4}} \rt),
  \end{align*}
  which implies the RHS of~\eqref{eq:lem-concentration-1}.
\end{proof}

\end{document}